\documentclass[letterpaper]{article}
\usepackage[preprint]{aaai2027}
\usepackage[hyphens]{url}
\usepackage{graphicx}
\usepackage{natbib}
\usepackage{caption}
\usepackage{booktabs}
\usepackage{multirow}
\usepackage{amsmath,amssymb,amsthm,bm}
\usepackage{xcolor}
\usepackage{tikz}
\newcommand{\val}[2]{#1\,{\scriptsize$\pm$#2}}
\newcommand{\E}{\mathbb{E}}
\newcommand{\Var}{\operatorname{Var}}
\newtheorem{assumption}{Assumption}
\newtheorem{lemma}{Lemma}
\newtheorem{proposition}{Proposition}
\theoremstyle{remark}
\newtheorem*{remark}{Remark}

\title{SpecBraM: What Should an EEG Foundation Model Predict?\\
Masked Band-Power Prediction versus Waveform Reconstruction}
\author{Peng Xie\textsuperscript{\rm 1}, Yequan Bie\textsuperscript{\rm 1}, Jianda Mao\textsuperscript{\rm 1}, Kani Chen\textsuperscript{\rm 1}}
\affiliations{\textsuperscript{\rm 1}The Hong Kong University of Science and Technology}

\begin{document}
\maketitle

\begin{figure*}[!t]\centering
\begin{tikzpicture}
\node[anchor=south west,inner sep=0] (img) {\includegraphics[width=0.98\textwidth]{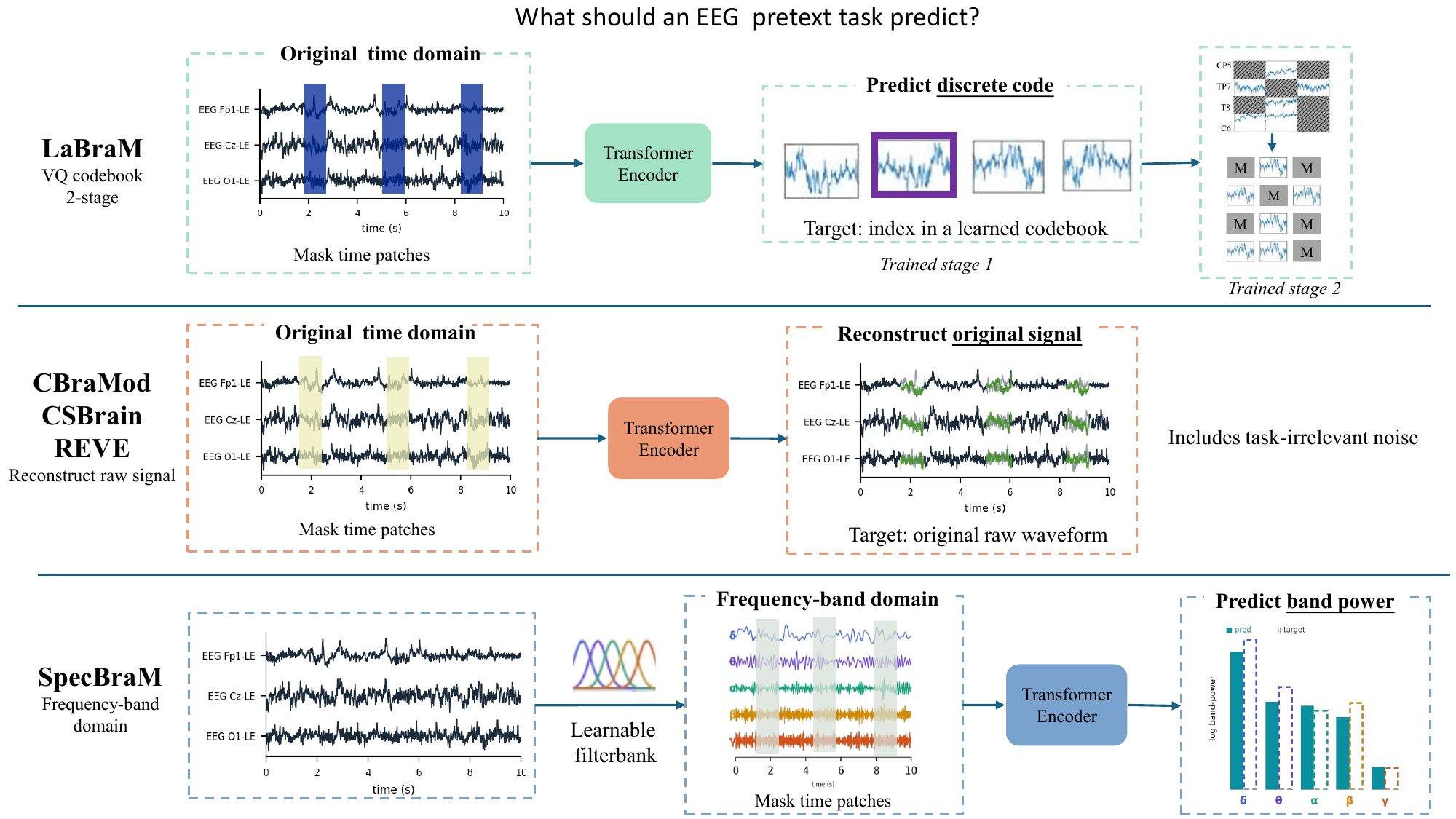}};
\begin{scope}[x={(img.south east)},y={(img.north west)}]
\node[anchor=north west,fill=white,draw=black!60,rounded corners=1pt,inner sep=2pt,
      font=\scriptsize\sffamily,align=left] at (0.005,0.30)
      {shown: filterbank-tokenizer variant\\main model: linear patch embedding};
\end{scope}
\end{tikzpicture}
\caption{\textbf{What should an EEG pretext task predict?} LaBraM predicts discrete codes
from a separately trained vector-quantized tokenizer; CBraMod, CSBrain and REVE reconstruct
the raw waveform of masked patches. SpecBraM keeps the masked-patch setting but changes the
\emph{target}: it regresses the log spectral energy of each masked patch in a few fixed
frequency bands. The bottom row shows the \emph{filterbank-tokenizer variant}; the main
model uses a plain linear patch embedding. In our experiments the effect of the target is
larger than that of the tokenizer (Sec.~\ref{sec:results}).}
\label{fig:contrast}
\end{figure*}

\begin{abstract}
Self-supervised EEG models often reconstruct masked waveforms or predict discrete codes.
We study a task-aligned alternative: masked band-power prediction (MBP), which predicts
fixed narrow-band log spectral energy for masked channel-time patches. This target retains
rhythm power relevant to sleep staging while avoiding phase-sensitive waveform reconstruction
and a learned codebook. Across three pretraining seeds, we compare band-power and waveform
targets with matched backbones, pretraining data (2,388 hours), and training steps, including
a 2x2 tokenizer-by-target design. On ISRUC and HMC sleep staging, MBP exceeds raw- and
band-waveform reconstruction by 1.6--2.8 balanced-accuracy points with all labels and
4.7--7.3 points with 1\% of labels under a strict linear probe; the target effect exceeds
the tokenizer effect. Its frozen features reach 0.7916/0.7425 balanced accuracy, versus
0.7636/0.7227 for a matched rich handcrafted spectral baseline, although the gap is about
one point with 1\% of labels. Full fine-tuning reaches 0.8107/0.7669. The gains do not
extend to every task with spectral cues, including motor imagery, depression screening, and
vigilance regression. These results support choosing pretraining targets to match the physical
quantities and spatial and temporal scales relevant to downstream labels.
\end{abstract}

\section{Introduction}
EEG is central to clinical neuroscience---sleep staging, screening for abnormal background
activity, studies of depression and cognitive load---but expert labels are scarce, which
motivates foundation models pretrained once on large unlabeled corpora such as the TUH EEG
Corpus (TUEG) \cite{obeid2016tuh} and transferred to many tasks
\cite{jiang2024labram,wang2025cbramod,csbrain,reve2025}. The dominant designs borrow their
pretext task from other modalities. LaBraM \cite{jiang2024labram} trains a vector-quantized
tokenizer and predicts its discrete codes, as BEiT does for images \cite{bao2022beit}.
CBraMod \cite{wang2025cbramod}, CSBrain \cite{csbrain} and REVE \cite{reve2025} reconstruct
the raw waveform of masked patches, as masked autoencoders reconstruct pixels \cite{he2022mae}.

We ask what an EEG pretext target \emph{should} be. Our answer is a principle rather than a
new architecture: \emph{predict the physical quantity that the downstream readouts depend on,
not a proxy of the signal.} In EEG, a large class of readouts is defined by spectral energy:
AASM sleep stages are specified by the amount of delta activity, spindles and alpha
\cite{berry2017aasm}; clinical reports describe ``background slowing''; quantitative EEG uses
band powers and their ratios. Meanwhile the phase of each rhythm within a sub-second patch
carries little information about the state and cannot be inferred from neighbouring patches,
so a waveform-reconstruction loss spends much of its gradient on an irreducible component.
Sec.~\ref{sec:theory} makes this precise under an idealized stationary-Gaussian model.

We test the principle with its simplest instance, \textbf{masked band-power prediction}: on a
standard masked-patch transformer, replace the waveform target by the log energy of each
masked patch in five fixed bands. Our contributions are:
\begin{itemize}
\item \textbf{A controlled target study.} Holding backbone, data, step budget and launcher
fixed, we compare six targets on the filterbank tokenizer and a $2{\times}2$ tokenizer$\times$
target design, each over three pretraining seeds (Tables~\ref{tab:targets}
and~\ref{tab:twobytwo}). Every band-power variant beats both waveform-reconstruction variants
on ISRUC and HMC under a strict linear probe, on every seed. A band-\emph{waveform}
reconstruction control also loses; this is consistent with discarding phase (and the
patch's innovation) being the useful ingredient, though that control also changes the output
dimension and loss scale and so does not isolate the cause on its own.
\item \textbf{Few labels and short training.} The gap widens to 5--7 points with $1\%$ of the
labels, and band power at pretraining epoch 4 already exceeds reconstruction at epoch 16
(Tables~\ref{tab:twobytwo} and~\ref{tab:compute}).
\item \textbf{Comparison with direct spectral features.} Probed identically, SpecBraM exceeds a
rich handcrafted spectral feature set by 2.0--2.8 points (about one point with 1\% of labels),
whereas the waveform-reconstruction model of the same architecture does not exceed it on HMC
or at 1\% of labels (Tables~\ref{tab:lpmain} and~\ref{tab:labeleff}).
\item \textbf{Comparison with existing models.} Under the same strict probe, SpecBraM's frozen
features beat the official CBraMod and LaBraM weights on all seven tasks
(Tables~\ref{tab:lpmain}, \ref{tab:taskbin}--\ref{tab:seedvig}); with fine-tuning it reaches
the best published sleep-staging results on ISRUC and HMC (Table~\ref{tab:ft30}). Because
these models differ in pretraining data and duration, this comparison does not by itself
establish data efficiency; the controlled study above is our main evidence.
\item \textbf{Where the benefit stops.} The advantage does not extend to every task with
spectral cues: it is absent for motor imagery (whose labels depend on spatially localized
$\mu/\beta$ changes), depression screening and vigilance regression. On vigilance,
high-passing the input at 4\,Hz restores band power's lead, consistent with low-frequency
(likely ocular) contamination of the target, although that manipulation changes input and
target together and the mechanism is not yet isolated (Sec.~\ref{sec:boundary}).
\end{itemize}

\noindent We do not claim a new paradigm: masked prediction of hand-crafted features is
established in vision \cite{wei2022maskfeat} and speech \cite{hsu2021hubert}, and spectral
information enters the targets of several EEG models (Sec.~\ref{sec:related}).
This work was completed and submitted for peer review in July 2026; FAME
\cite{yu2026fame} and MANAS-2 \cite{kulkarni2026manas2}, which also use band log-power
targets, appeared afterwards and are concurrent.

\section{Related Work}\label{sec:related}
\noindent\textbf{Pretext targets of EEG foundation models.} Most models pretrain on TUEG and
differ mainly in the masked-prediction target. \emph{Raw-waveform reconstruction}: CBraMod
\cite{wang2025cbramod} with a criss-cross transformer, CSBrain \cite{csbrain} with
cross-scale modeling, REVE \cite{reve2025} with an additional global-token reconstruction;
earlier, BENDR \cite{kostas2021bendr} and Neuro-GPT \cite{cui2024neurogpt} used contrastive
or autoregressive objectives, EEGPT \cite{wang2024eegpt} combines reconstruction with latent
alignment, and NeuroLM \cite{jiang2025neurolm} bridges EEG to language tokens.
\emph{Discrete codes}: LaBraM \cite{jiang2024labram} predicts codes of a VQ tokenizer trained
to reconstruct the Fourier amplitude \emph{and phase} of each patch; BioSerenity-E1
\cite{bettinardi2025bioserenity} quantizes log-multitaper spectra; BandVQ
\cite{sukhbaatar2026bandvq} trains one VQ tokenizer per frequency band and predicts quantized
log-power codes. In intracranial recordings BrainBERT \cite{wang2023brainbert} reconstructs
masked spectrograms and Brant \cite{zhang2023brant} reconstructs raw patches with spectral
input features. BIOT \cite{yang2023biot} tokenizes biosignals with STFT magnitudes.

\noindent\textbf{Concurrent work.} FAME \cite{yu2026fame} observes that the $1/f$ spectrum of
EEG and the $\ell_2$ loss bias masked autoencoders towards low frequencies, and corrects this
by decoding the band-averaged log time--frequency power of the \emph{whole} sequence (masked
and unmasked positions) with per-band standardization and equal band weights. MANAS-2
\cite{kulkarni2026manas2} jointly reconstructs raw waveform patches and six-band log-power
targets and adds a constraint on the RMS energy of the reconstructed waveform. Our target is
close to FAME's, but the motivation and evidence differ: we predict only \emph{masked} patches
from context, frame the choice as discarding unpredictable phase (Sec.~\ref{sec:theory}), and
isolate the effect of the target with matched controls including band-waveform
reconstruction. FAME's per-band standardization is a natural refinement of our target
(Sec.~\ref{sec:limits}).

\noindent\textbf{Hand-crafted targets in other modalities.} MaskFeat \cite{wei2022maskfeat}
showed that predicting HOG features of masked regions outperforms predicting pixels; the first
iteration of HuBERT \cite{hsu2021hubert} predicts clusters of MFCC features. Band power plays
the same role for EEG: a fixed feature that keeps semantics and discards low-level detail.
Latent-predictive methods (I-JEPA \cite{assran2023ijepa}, data2vec \cite{baevski2022data2vec},
EEG2Rep \cite{foumani2024eeg2rep}) predict learned representations instead and need explicit
anti-collapse mechanisms such as SIGReg \cite{balestriero2025lejepa}, which we use as a light
regularizer.

\section{Method}\label{sec:method}
\noindent\textbf{Input and tokens.} A 10-s window at 256\,Hz over the 19 channels of the
10--20 montage, normalized per recording (median and interquartile range), is cut per channel
into non-overlapping patches of $P{=}200$ samples ($0.78$\,s), giving a token grid with
$C{=}19$ channels and $T_p{=}\lfloor 2560/200\rfloor{=}12$ time patches; the last 160 samples
($0.625$\,s) of each window are not used, during pretraining and in every downstream window
(including each of the three 10-s windows of a 30-s sleep epoch). The main model embeds each patch with a single
linear map (as CBraMod does). A variant decomposes each patch with a learnable 5-band
filterbank before embedding (``filterbank tokenizer''); we report both because the target
study crosses tokenizer and target.

\noindent\textbf{Encoder and masking.} A 12-layer criss-cross transformer ($D{=}512$, 8 heads;
51.3M parameters) alternates attention along the channel and time axes, with learned channel
and time position embeddings. Each (channel, time) patch is masked independently with
probability $0.3$; the embeddings of masked patches are replaced by a learned mask token.

\noindent\textbf{Target.} For each patch $\bm{x}_{c,t}\in\mathbb{R}^{P}$ we compute five
energies with \emph{fixed} (non-learnable) band-pass filters $h_b$ applied to the
uncorrupted signal:
\begin{align}
P_{c,t,b}&=\frac1P\sum_{\tau}\bigl((h_b*\bm{x}_c)_{t,\tau}\bigr)^2,\nonumber\\
y_{c,t,b}&=\log\bigl(1+P_{c,t,b}\bigr),\qquad b=1,\dots,5. \label{eq:target}
\end{align}
Each $h_b$ is a 65-tap Hann-windowed cosine (zero-mean, unit norm). Their nominal centres are
2/6/10/21/38\,Hz, but with a 0.25-s kernel the measured responses are those of five narrow
bands peaking at about 4.6/6/10/21/38\,Hz (Table~\ref{tab:filters}); the lowest two overlap
and the slowest activity ($<2$\,Hz) is barely represented. We therefore describe the main
target as \emph{five fixed narrow-band spectral energies}. A variant computes exact
standard-band energies (0.5--4/4--8/8--13/13--30/30--45\,Hz) with a per-patch FFT; it stays
well above both waveform-reconstruction targets but is slightly below the main target on HMC
($-0.6$ points; Table~\ref{tab:targets}), so the advantage over reconstruction does not depend
on this particular filterbank, although the two band-power targets are not interchangeable.
We therefore state the principle at the level of \emph{coarse spectral energy}: the classical
$\delta/\theta/\alpha/\beta/\gamma$ partition, which reflects distinct generators but whose exact
edges are conventions (e.g.\ alpha is given as 8--12 or 8--13\,Hz, sleep scoring adds a
spindle band, and individual alpha frequency varies by several hertz), is one natural
instance, and the narrow bands of the main target are another. Which partition is best, and
whether the difference comes from the band definition or the estimator, is left to future
work.

\noindent\textbf{Objective.} A linear head maps each encoder output $H_{c,t}$ to five
predictions $\hat y_{c,t,b}$. With $\mathcal{M}$ the set of masked patches,
\begin{equation}
\mathcal{L}=\frac{1}{5|\mathcal{M}|}\sum_{(c,t)\in\mathcal{M}}\sum_{b=1}^{5}
\bigl(\hat y_{c,t,b}-y_{c,t,b}\bigr)^2\;+\;0.05\,\mathrm{SIGReg}(H),
\label{eq:loss}
\end{equation}
where SIGReg \cite{balestriero2025lejepa} is an isotropy penalty on the encoder outputs. There
is no reconstruction term and no latent-prediction term. The target must be fixed: if the
target filters are learnable, the loss is minimized by shrinking them to zero
(Appendix~\ref{app:earlier}). The waveform-reconstruction controls replace the first term of
Eq.~\eqref{eq:loss} by the MSE between a linear decoder output and the raw samples of each
masked patch (``raw'') or the five fixed band-filtered waveforms of the patch
(``band-waveform''), with everything else unchanged.

\section{Why Band Power: an Idealized Analysis}\label{sec:theory}
We formalize the motivation under an idealized model; the full statements and proofs are in
Appendix~\ref{app:theory}. The analysis is a motivation and a source of testable predictions,
not an exact description of the implementation (which uses $\log(1+P)$ and overlapping
narrow-band filters).

\noindent\emph{Model.} Given a brain state $\theta$, a patch is a segment of a zero-mean
stationary Gaussian process with spectral density $S_\theta$; the state acts on the data only
through $S_\theta$. Asymptotically, the discrete Fourier coefficients at distinct frequencies
are independent circular complex Gaussians, so the periodogram ordinates are $I_k\approx
S_\theta(f_k)E_k$ with $E_k\sim\mathrm{Exp}(1)$, and the phases are uniform and independent of
$\theta$ \cite{brillinger1981}. The Whittle likelihood \cite{whittle1953} depends on the data
only through $\{I_k\}$.

\noindent\emph{Consequences.} (i) The periodogram is sufficient for $\theta$ and the phases are
ancillary (Prop.~\ref{prop:suff}). (ii) If the spectrum is approximately constant within each
of a few bands and the observation is restricted to those bands, the band-averaged periodogram
is the sufficient statistic and maximum-likelihood estimate of the band levels
(Prop.~\ref{prop:mle}). (iii) $\Var[\log P_b]=\psi'(m_b)\approx 1/m_b$ does not depend on the
state, so the log is variance-stabilizing and squared error is an appropriate loss
(Prop.~\ref{prop:var}); for the implemented $\log(1+P)$ the noise variance is about
$m_b^{-1}\bigl(s_b/(1+s_b)\bigr)^2$ and does depend on the band level $s_b$
(Prop.~\ref{prop:var1p}). (iv) For any target, the Bayes risk of predicting a masked patch
from its context splits into the posterior uncertainty about the state plus a term that no
model can reduce (Prop.~\ref{prop:decomp}). For band power the irreducible term is small and
the reducible term is exactly uncertainty about the state; for the waveform the irreducible
term is the patch's own innovation (random amplitude and phase), which scales with the
signal power and, through Parseval's identity, is dominated by low frequencies under a $1/f$
spectrum.

\noindent\emph{Predictions.} When labels describe spectral states, a band-power target should
give representations that are easier to read out linearly than a waveform target, especially
with few labels; reconstructing band-filtered \emph{waveforms} should not help, because it
still requires predicting phase; and the advantage should shrink for tasks that depend on
transient waveform shape or cross-channel phase relations. Sec.~\ref{sec:results} tests these.

\section{Experimental Setup}\label{sec:setup}
\noindent\textbf{Pretraining.} TUEG v2.0.1 at 256\,Hz, 10-s windows, per-recording robust
normalization, with recordings shorter than 5\,min dropped, 60\,s trimmed from each end and
windows exceeding $150\,\mu$V rejected. All models use a random subset of 2{,}100 patients
(859{,}606 windows, 2{,}388\,h; the same subset for every run), AdamW (learning rate
$1.7{\times}10^{-3}$, weight decay $0.05$), 7 warm-up epochs and a cosine schedule defined
over 105 epochs, stopped after epoch 16 for every run (matched steps), effective batch 512 on
four RTX A6000 GPUs. Pretraining seeds 42/43/44 change initialization and data order on the
same subset. Downstream patients are not excluded from pretraining (as in prior work).

\noindent\textbf{Strict linear probe (primary metric)}\label{sec:lpproto}. The encoder is
frozen; the tokens of a window are mean-pooled; features are standardized on the training
split; a class-balanced logistic regression is fitted with $C\in\{0.01,0.1,1\}$ selected on the
validation split by balanced accuracy, and the test split is scored once. For sleep, a 30-s
epoch is the mean of its three 10-s window features. The probe is deterministic, so baselines
with a single released checkpoint need no repetitions; our models are reported over three
pretraining seeds. Label efficiency reuses the selected $C$ on 1--50\% of the training epochs
(five fixed random subsets, identical for all models). Each model receives the same
samples in its own documented input convention: ours at 256\,Hz with per-recording robust
scaling on the 19 standard leads; CBraMod and LaBraM at 200\,Hz, raw $\mu$V$/100$, with the
recording's own channels (LaBraM by its channel names); BIOT with its 95th-percentile scaling
on C3/C4 fed to its C3--A2/C4--A1 channel tokens. This mean-pool protocol does not use the
baselines' downstream heads, which we discuss in Sec.~\ref{sec:limits}.

\noindent\textbf{Fine-tuning (sleep).} Following CBraMod/CSBrain, 30-s epochs are grouped in
sequences of 20; each epoch is encoded (mean of its three 10-s windows), and a shared one-layer
transformer predicts one stage per epoch. Encoder and head are trained with AdamW (learning
rate $10^{-4}$, weight decay $0.05$) for up to 50 epochs with early stopping on validation
balanced accuracy (patience 10).

\noindent\textbf{Downstream data.} Sleep: ISRUC subgroup I \cite{khalighi2016isruc} (subjects
1--80/81--90/91--100) and HMC \cite{alvarez2021hmc} (100/25/26 by sorted subject ID), the
splits of CBraMod and CSBrain. Beyond sleep we use CBraMod's own preprocessing scripts to
produce samples and subject-disjoint splits: Mental Arithmetic \cite{zyma2019mentalarith}
(stress, 2-class, 5-s windows), Mumtaz \cite{mumtaz2016mdd} (depression, 2-class, 5-s),
FACED \cite{chen2023faced} (emotion, 9-class, 10-s; rebuilt from the public raw recordings),
PhysioNet-MI \cite{schalk2004bci2000} (motor imagery, 4-class, 4-s), and SEED-VIG
\cite{zheng2017seedvig} (vigilance regression, 8-s; we use the split stated in the CBraMod
paper, see Appendix~\ref{app:data}). Our sample counts match CBraMod's (e.g.\ 1{,}707 for MA,
7{,}143 for Mumtaz).

\section{Results}\label{sec:results}

\begin{table*}[t]\centering\small
\setlength{\tabcolsep}{4pt}
\caption{\textbf{Pretext-target study} at matched backbone, data ($2{,}388$\,h), steps
(16 epochs) and launcher; only the target (or, last row, the mask axis) changes.
Filterbank tokenizer; strict linear probe, mean${\pm}$std over three pretraining seeds.
``1\%'': balanced accuracy with $1\%$ of the labelled training epochs (mean of five subsets).
$\Delta$: difference to the band-power row in balanced-accuracy points.}
\label{tab:targets}
\begin{tabular}{l cc cc cc}
\toprule
& \multicolumn{2}{c}{BA (100\% labels)} & \multicolumn{2}{c}{BA (1\% labels)} & \multicolumn{2}{c}{$\Delta$ BA (pts)}\\
\cmidrule(lr){2-3}\cmidrule(lr){4-5}\cmidrule(lr){6-7}
Pretext target & ISRUC & HMC & ISRUC & HMC & ISRUC & HMC\\
\midrule
\textbf{Band power} (fixed narrow-band filters) & \val{0.7863}{0.0039} & \val{0.7394}{0.0025} & \val{0.758}{0.002} & \val{0.674}{0.009} & -- & -- \\
\quad band power, leak-free (per-patch filtering) & \val{0.7906}{0.0018} & \val{0.7398}{0.0044} & \val{0.753}{0.004} & \val{0.675}{0.012} & +0.4 & +0.0 \\
\quad band power, standard bands (per-patch FFT) & \val{0.7827}{0.0021} & \val{0.7339}{0.0021} & \val{0.725}{0.009} & \val{0.638}{0.004} & -0.4 & -0.6 \\
\midrule
Raw-waveform reconstruction & \val{0.7677}{0.0028} & \val{0.7180}{0.0039} & \val{0.711}{0.026} & \val{0.623}{0.018} & -1.9 & -2.1 \\
Band-waveform reconstruction & \val{0.7682}{0.0061} & \val{0.7119}{0.0024} & \val{0.697}{0.019} & \val{0.613}{0.018} & -1.8 & -2.7 \\
Band power, frequency-axis masking & \val{0.7639}{0.0058} & \val{0.7051}{0.0019} & \val{0.690}{0.018} & \val{0.639}{0.001} & -2.2 & -3.4 \\
\bottomrule
\end{tabular}
\end{table*}

\begin{table*}[t]\centering\small
\setlength{\tabcolsep}{3.5pt}
\caption{\textbf{Tokenizer $\times$ target.} Band power (BP) vs.\ raw-waveform
reconstruction (Raw) under both tokenizers; everything else matched. Strict linear probe,
mean${\pm}$std over three pretraining seeds. $\Delta$: paired (same-seed) difference in
balanced-accuracy points; ``wins'': seeds on which BP $>$ Raw.}
\label{tab:twobytwo}
\begin{tabular}{l cccc cccc}
\toprule
& \multicolumn{4}{c}{ISRUC} & \multicolumn{4}{c}{HMC}\\
\cmidrule(lr){2-5}\cmidrule(lr){6-9}
Tokenizer & BP & Raw & $\Delta$ & wins & BP & Raw & $\Delta$ & wins\\
\midrule
Linear (CBraMod-style patch embedding) & \val{0.7916}{0.0081} & \val{0.7755}{0.0034} & +1.6\,{\scriptsize$\pm$0.7} & 3/3 & \val{0.7425}{0.0045} & \val{0.7141}{0.0062} & +2.8\,{\scriptsize$\pm$1.1} & 3/3 \\
Filterbank & \val{0.7863}{0.0039} & \val{0.7677}{0.0028} & +1.9\,{\scriptsize$\pm$0.3} & 3/3 & \val{0.7394}{0.0025} & \val{0.7180}{0.0039} & +2.1\,{\scriptsize$\pm$0.1} & 3/3 \\
\midrule
Linear, 1\% labels & \val{0.760}{0.002} & \val{0.708}{0.010} & +5.2\,{\scriptsize$\pm$1.2} & 3/3 & \val{0.685}{0.001} & \val{0.613}{0.015} & +7.3\,{\scriptsize$\pm$1.6} & 3/3 \\
Filterbank, 1\% labels & \val{0.758}{0.002} & \val{0.711}{0.026} & +4.7\,{\scriptsize$\pm$2.5} & 3/3 & \val{0.674}{0.009} & \val{0.623}{0.018} & +5.1\,{\scriptsize$\pm$0.9} & 3/3 \\
\bottomrule
\end{tabular}
\end{table*}

\begin{figure}[t]\centering
\includegraphics[width=\columnwidth]{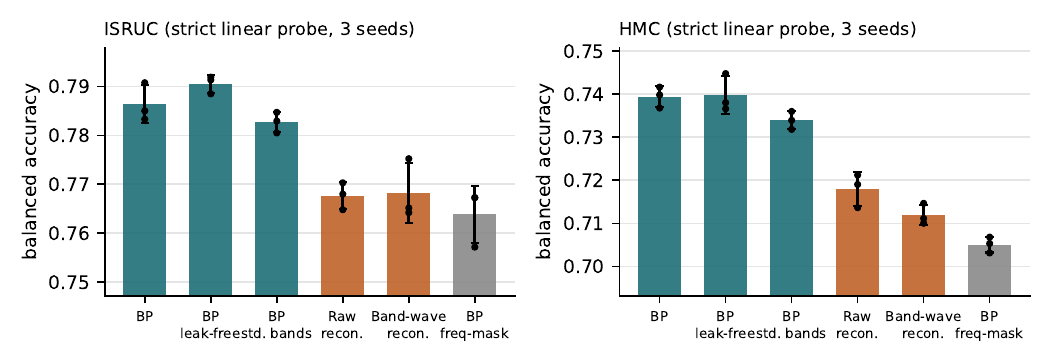}
\caption{\textbf{Target study} (Table~\ref{tab:targets}): bars are means over three
pretraining seeds, dots individual seeds. Blue: band-power targets; orange: waveform
reconstruction; grey: band power with frequency-axis masking.}
\label{fig:targets}
\end{figure}

\subsection{The target effect exceeds the tokenizer effect}
Table~\ref{tab:targets} changes only the pretext target. The three band-power variants---the
main narrow-band target, a leak-free variant that filters each patch in isolation, and exact
standard bands via per-patch FFT---lie close together (ISRUC $0.7827$--$0.7906$, HMC
$0.7339$--$0.7398$), although the FFT variant is $0.6$ points below the main target on HMC,
more than one seed standard deviation. Both reconstruction targets are 1.8--2.8 balanced-accuracy
points lower than the main target (raw $0.7677$/$0.7180$, band-waveform $0.7682$/$0.7119$),
in the same direction on all three seeds, and the gap grows with fewer labels (at $1\%$: $0.758$ vs.\ $0.711$
on ISRUC, $0.674$ vs.\ $0.623$ on HMC). Band-waveform reconstruction is as poor as raw
reconstruction. This is consistent with the prediction that discarding phase and the patch's
innovation is what helps, rather than moving to the frequency domain, but it is not a clean
attribution: the band-waveform target also has a $5{\times}200$-dimensional output and a
different loss scale. Masking frequency bands instead of time patches is worst, supporting
masking in time and predicting the full spectral profile of a patch from its spatio-temporal
context. The leak-free and per-patch-FFT variants, which use no samples outside the patch,
keep the advantage over reconstruction; leakage through the 65-tap filters is therefore not
necessary for the effect, although these controls do not show whether the main model makes
use of it.

Table~\ref{tab:twobytwo} crosses target and tokenizer. The target effect is the same with a
plain linear patch embedding ($+1.6$/$+2.8$ points on ISRUC/HMC) and with the filterbank
tokenizer ($+1.9$/$+2.1$), positive on all 12 seed--dataset pairs, and $+4.7$ to $+7.3$
points with $1\%$ of labels. In these experiments the tokenizer effect is smaller (linear
$0.7916$/$0.7425$ vs.\ filterbank $0.7863$/$0.7394$), so we use the simpler linear tokenizer as
the main model.
Along pretraining (Table~\ref{tab:compute}), band power at epoch 4 already exceeds raw
reconstruction at epoch 16 on both datasets.

\begin{table}[t]\centering\small
\caption{\textbf{Along pretraining} (filterbank tokenizer, seed 42, strict linear probe BA).
Band power at epoch 4 already exceeds raw-waveform reconstruction at epoch 16.}
\label{tab:compute}
\begin{tabular}{c cc cc}
\toprule
& \multicolumn{2}{c}{ISRUC} & \multicolumn{2}{c}{HMC}\\
\cmidrule(lr){2-3}\cmidrule(lr){4-5}
Epoch & BP & Raw & BP & Raw\\
\midrule
4 & 0.7719 & 0.7394 & 0.7418 & 0.6714 \\
8 & 0.7862 & 0.7630 & 0.7339 & 0.7026 \\
16 & 0.7833 & 0.7680 & 0.7398 & 0.7190 \\
\bottomrule
\end{tabular}
\end{table}

\begin{table*}[t]\centering\small
\setlength{\tabcolsep}{4pt}
\caption{\textbf{Strict linear probe on sleep staging.} Frozen encoder, mean-pooled
features, logistic regression with $C$ selected on the validation split, test scored once
(deterministic); a 30-s epoch is the mean of its three 10-s window features. Every model we
ran sees the identical epochs and splits, each in its own input convention
(Sec.~\ref{sec:lpproto}). SpecBraM rows: mean${\pm}$std over three pretraining seeds.
$^\dagger$Quoted from the REVE paper (its own linear-probing protocol). Handcrafted rows:
features averaged over the three 10-s windows of each epoch (as for every encoder; better than
whole-epoch features), same samples and probe;
$^\ast$1-Hz log PSD (0.5--45\,Hz), relative power in six bands, Hjorth parameters, spectral
entropy, 95\% spectral edge, line length and kurtosis per channel.}
\label{tab:lpmain}
\begin{tabular}{l c ccc ccc}
\toprule
& & \multicolumn{3}{c}{ISRUC} & \multicolumn{3}{c}{HMC}\\
\cmidrule(lr){3-5}\cmidrule(lr){6-8}
Model & Pretrain (h) & BA & $\kappa$ & F1 & BA & $\kappa$ & F1\\
\midrule
Random init (our arch.) & 0 & 0.6884 & 0.6111 & 0.6957 & 0.6256 & 0.5274 & 0.6332 \\
BIOT (six-dataset) & mixed & 0.7358 & 0.6658 & 0.7403 & 0.6578 & 0.5623 & 0.6610 \\
BIOT (SHHS+PREST) & mixed & 0.7430 & 0.6683 & 0.7411 & 0.6105 & 0.5042 & 0.6175 \\
CBraMod & ${\sim}9$k & 0.7278 & 0.6581 & 0.7343 & 0.6190 & 0.5131 & 0.6193 \\
LaBraM-Base & ${\sim}2.5$k & 0.7523 & 0.6857 & 0.7543 & 0.6323 & 0.5303 & 0.6357 \\
REVE-Base$^\dagger$ & ${\sim}60$k & 0.697 & -- & -- & 0.647 & -- & -- \\
REVE-Large$^\dagger$ & ${\sim}60$k & 0.758 & -- & -- & 0.710 & -- & -- \\
\midrule
Handcrafted: 5-band log/relative power & -- & 0.7418 & 0.6716 & 0.7473 & 0.6825 & 0.5900 & 0.6776 \\
Handcrafted: rich spectral set$^\ast$ & -- & 0.7636 & 0.7003 & 0.7730 & 0.7227 & 0.6484 & 0.7263 \\
\midrule
SpecBraM, filterbank tokenizer & $2{,}388$ & \val{0.7863}{0.0039} & \val{0.7338}{0.0046} & \val{0.7942}{0.0034} & \val{0.7394}{0.0025} & \val{0.6701}{0.0023} & \val{0.7457}{0.0019} \\
\textbf{SpecBraM} & $2{,}388$ & \val{0.7916}{0.0081} & \val{0.7375}{0.0099} & \val{0.7974}{0.0065} & \val{0.7425}{0.0045} & \val{0.6765}{0.0050} & \val{0.7495}{0.0034} \\
\bottomrule
\end{tabular}
\end{table*}

\subsection{Frozen features versus EEG foundation models}
Under the same strict linear probe (Table~\ref{tab:lpmain}), SpecBraM reaches
$0.7916{\pm}0.0081$ on ISRUC and $0.7425{\pm}0.0045$ on HMC, against $0.7523$/$0.6323$ for
LaBraM, $0.7278$/$0.6190$ for CBraMod and at most $0.7430$/$0.6578$ for BIOT; REVE-Large
reports $0.758$/$0.710$ under its own probing protocol. All baselines are above the
random-initialization floor of our architecture ($0.6884$/$0.6256$), which is a basic sanity
check of our adapters; it does not exclude that each model's input adaptation and mean-pooled
readout affect the comparison (Sec.~\ref{sec:limits}). Label efficiency
(Table~\ref{tab:labeleff}, Fig.~\ref{fig:labeleff}) is the clearest view: with $1\%$ of the
training labels SpecBraM reaches $0.760$/$0.685$, above the full-label probes of all external
pretrained models we test (best $0.752$ on ISRUC, $0.658$ on HMC). It does not exceed every
model: the same-architecture raw-reconstruction model reaches $0.775$/$0.714$ with all labels,
and needs about 10--25\% of the labels to match what SpecBraM achieves with 1\%.

\noindent\textbf{Direct spectral features.} A natural question is what self-supervision adds over
simply computing spectral features. We probe two handcrafted feature sets on the same epochs
with the same probe, averaging over the three 10-s windows as for every encoder (this was
better than whole-epoch features): five-band log and relative power, and a rich set per
channel (1-Hz log power spectrum from 0.5 to 45\,Hz, relative power in six bands, Hjorth
parameters, spectral entropy, 95\% spectral edge, line length and kurtosis). The rich set
reaches $0.7636$/$0.7227$ (Table~\ref{tab:lpmain})---above every external foundation model under
this probe, and above the same-architecture raw-reconstruction model on HMC ($0.7141$). SpecBraM
exceeds it by $2.8$/$2.0$ points with all labels, but only by about one point with $1\%$ of the
labels ($0.760$/$0.685$ vs.\ $0.747$/$0.674$; Table~\ref{tab:labeleff}), and with all labels the
rich features exceed SpecBraM's 1\%-label result. Self-supervision with a spectral target thus
adds a modest but consistent gain over direct spectral features for sleep staging, whereas
waveform reconstruction does not reliably reach them. All probes share the same grid
$C\in\{0.01,0.1,1\}$; the handcrafted probes select its lower edge, so a wider grid might
raise them slightly.

\begin{table}[t]\centering\small
\setlength{\tabcolsep}{3pt}
\caption{\textbf{Label efficiency} (strict linear probe, balanced accuracy vs.\ fraction of
labelled training epochs; ${<}100\%$: mean of five fixed random subsets shared by all
models). Our rows: mean over three pretraining seeds.}
\label{tab:labeleff}
\resizebox{\columnwidth}{!}{%
\begin{tabular}{l cccccc}
\toprule
Model & 1\% & 5\% & 10\% & 25\% & 50\% & 100\%\\
\midrule
\multicolumn{7}{l}{\emph{ISRUC}}\\
Random init (our arch.) & 0.542 & 0.615 & 0.644 & 0.672 & 0.684 & 0.689 \\
BIOT & 0.655 & 0.703 & 0.716 & 0.728 & 0.734 & 0.736 \\
CBraMod & 0.660 & 0.709 & 0.717 & 0.726 & 0.727 & 0.729 \\
LaBraM-Base & 0.680 & 0.724 & 0.730 & 0.741 & 0.746 & 0.752 \\
Handcrafted, 5-band & 0.707 & 0.726 & 0.731 & 0.739 & 0.741 & 0.742 \\
Handcrafted, rich & 0.747 & 0.758 & 0.763 & 0.764 & 0.764 & 0.763 \\
Raw-waveform recon.\ (same arch.) & 0.708 & 0.742 & 0.755 & 0.767 & 0.773 & 0.775 \\
\textbf{SpecBraM} & 0.760 & 0.781 & 0.786 & 0.791 & 0.791 & 0.792 \\
\multicolumn{7}{l}{\emph{HMC}}\\
Random init (our arch.) & 0.515 & 0.580 & 0.598 & 0.612 & 0.619 & 0.626 \\
BIOT & 0.582 & 0.625 & 0.638 & 0.648 & 0.655 & 0.658 \\
CBraMod & 0.539 & 0.590 & 0.602 & 0.613 & 0.616 & 0.620 \\
LaBraM-Base & 0.574 & 0.616 & 0.621 & 0.630 & 0.631 & 0.633 \\
Handcrafted, 5-band & 0.624 & 0.656 & 0.665 & 0.675 & 0.678 & 0.682 \\
Handcrafted, rich & 0.674 & 0.702 & 0.709 & 0.717 & 0.720 & 0.723 \\
Raw-waveform recon.\ (same arch.) & 0.613 & 0.664 & 0.685 & 0.702 & 0.708 & 0.714 \\
\textbf{SpecBraM} & 0.685 & 0.719 & 0.730 & 0.739 & 0.741 & 0.742 \\
\bottomrule
\end{tabular}}
\end{table}

\begin{figure}[t]\centering
\includegraphics[width=\columnwidth]{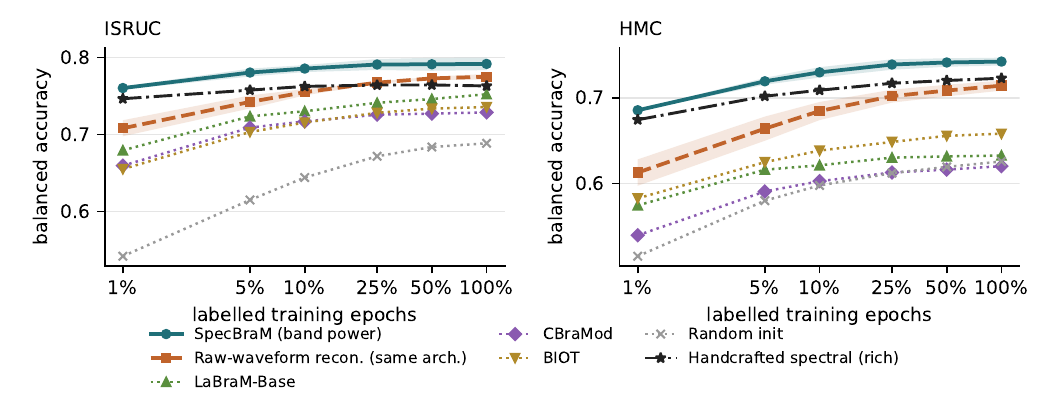}
\caption{\textbf{Label efficiency} under the strict linear probe. Our curves: mean over
three pretraining seeds with ${\pm}1$ std band.}
\label{fig:labeleff}
\end{figure}

\begin{table*}[t]\centering\footnotesize
\setlength{\tabcolsep}{3pt}
\caption{\textbf{Sleep staging, fine-tuned} (5-class, 30-s epochs in sequences of 20, one
shared 1-layer sequence transformer; identical subject-disjoint splits as
\citet{csbrain}). Baselines as reported by \citet{csbrain} (REVE: its paper, ISRUC only).
Our rows: mean${\pm}$std over three \emph{pretraining} seeds (one fine-tuning run each), so
the spread includes pretraining variance, unlike the baselines' fine-tuning-seed spread.}
\label{tab:ft30}
\resizebox{\textwidth}{!}{%
\begin{tabular}{l c ccc ccc}
\toprule
& & \multicolumn{3}{c}{ISRUC} & \multicolumn{3}{c}{HMC}\\
\cmidrule(lr){3-5}\cmidrule(lr){6-8}
Method & Pretrain (h) & BA & $\kappa$ & Weighted F1 & BA & $\kappa$ & Weighted F1\\
\midrule
\multicolumn{8}{l}{\emph{Task-specific models}}\\
EEGNet~\cite{lawhern2018eegnet} & -- & \val{0.7154}{.0121} & \val{0.7040}{.0173} & \val{0.7513}{.0124} & \val{0.6534}{.0122} & \val{0.5886}{.0201} & \val{0.6536}{.0168} \\
EEGConformer~\cite{song2023conformer} & -- & \val{0.7400}{.0133} & \val{0.7143}{.0162} & \val{0.7634}{.0151} & \val{0.7149}{.0086} & \val{0.6432}{.0055} & \val{0.7080}{.0039} \\
SPaRCNet~\cite{jing2023sparcnet} & -- & \val{0.7487}{.0075} & \val{0.7097}{.0132} & \val{0.7624}{.0092} & \val{0.4756}{.1109} & \val{0.3147}{.1315} & \val{0.4108}{.1310} \\
ContraWR~\cite{yang2023contrawr} & -- & \val{0.7402}{.0126} & \val{0.7178}{.0156} & \val{0.7610}{.0137} & \val{0.4242}{.0541} & \val{0.2340}{.0554} & \val{0.2987}{.0288} \\
CNN-Transformer~\cite{peh2022cnntransformer} & -- & \val{0.7363}{.0087} & \val{0.7129}{.0121} & \val{0.7719}{.0153} & \val{0.6573}{.0141} & \val{0.5961}{.0105} & \val{0.6896}{.0065} \\
FFCL~\cite{li2022ffcl} & -- & \val{0.7277}{.0182} & \val{0.7016}{.0292} & \val{0.7614}{.0197} & \val{0.4427}{.0702} & \val{0.2542}{.0654} & \val{0.2902}{.0485} \\
ST-Transformer~\cite{song2021sttransformer} & -- & \val{0.7381}{.0205} & \val{0.7013}{.0352} & \val{0.7681}{.0175} & \val{0.2559}{.0141} & \val{0.0503}{.0183} & \val{0.1428}{.0122} \\
\midrule
\multicolumn{8}{l}{\emph{Foundation models}}\\
BIOT~\cite{yang2023biot} & mixed & \val{0.7527}{.0121} & \val{0.7291}{.0230} & \val{0.7790}{.0146} & \val{0.6862}{.0041} & \val{0.6295}{.0113} & \val{0.7091}{.0147} \\
LaBraM-Base~\cite{jiang2024labram} & ${\sim}2.5$k & \val{0.7633}{.0102} & \val{0.7231}{.0182} & \val{0.7810}{.0133} & \val{0.7277}{.0101} & \val{0.6813}{.0053} & \val{0.7454}{.0027} \\
CBraMod~\cite{wang2025cbramod} & ${\sim}9$k & \val{0.7865}{.0110} & \val{0.7442}{.0152} & \val{0.8011}{.0099} & \val{0.7269}{.0041} & \val{0.6685}{.0104} & \val{0.7395}{.0089} \\
CSBrain~\cite{csbrain} & ${\sim}9$k & \val{0.7925}{.0030} & \val{0.7406}{.0102} & \val{0.7990}{.0091} & \val{0.7345}{.0047} & \val{0.6818}{.0046} & \val{0.7506}{.0042} \\
REVE~\cite{reve2025} & ${\sim}60$k & 0.7819 & 0.7500 & 0.8005 & -- & -- & -- \\
\midrule
Raw-waveform recon.\ (same arch.) & $2{,}388$ & \val{0.8008}{0.0066} & \val{0.7465}{0.0079} & \val{0.8043}{0.0074} & \val{0.7550}{0.0114} & \val{0.6978}{0.0148} & \val{0.7661}{0.0140} \\
\textbf{SpecBraM} & $2{,}388$ & \val{0.8107}{0.0048} & \val{0.7559}{0.0121} & \val{0.8102}{0.0103} & \val{0.7669}{0.0059} & \val{0.7177}{0.0046} & \val{0.7829}{0.0023} \\
\bottomrule
\end{tabular}}
\end{table*}

\subsection{Fine-tuning on sleep staging}
With full fine-tuning on 30-s epochs (Table~\ref{tab:ft30}), SpecBraM reaches
$0.8107{\pm}0.0048$ balanced accuracy on ISRUC and $0.7669{\pm}0.0059$ on HMC, above the best
published results (CSBrain $0.7925$/$0.7345$), and the highest Cohen's $\kappa$ on both
datasets. SpecBraM uses about a quarter of their pretraining hours, but the models also differ
in architecture and training recipe, so we do not read this comparison as evidence of data
efficiency on its own. Our standard deviations are across
\emph{pretraining} seeds and therefore include more sources of variance than the baselines'
fine-tuning seeds. After fine-tuning the gap to the matched raw-reconstruction model narrows
to about one point ($0.8008$/$0.7550$): full fine-tuning partly compensates for a weaker
initialization, and the target matters most for frozen and low-label use, as the analysis
predicts.

\begin{table*}[t]\centering\small
\setlength{\tabcolsep}{3.5pt}
\caption{\textbf{Strict linear probe beyond sleep: binary tasks.} Samples, channels and
subject-disjoint splits are produced by CBraMod's own preprocessing scripts, so every model
is probed on identical samples. Mental Arithmetic (MA) has only 4 test subjects.
$^\dagger$CBraMod's \emph{fine-tuned} result from its paper, for reference.}
\label{tab:taskbin}
\begin{tabular}{l ccc ccc}
\toprule
& \multicolumn{3}{c}{MA (stress, 2-class)} & \multicolumn{3}{c}{Mumtaz (MDD, 2-class)}\\
\cmidrule(lr){2-4}\cmidrule(lr){5-7}
Model & BA & AUC-PR & AUROC & BA & AUC-PR & AUROC\\
\midrule
\textbf{SpecBraM} & \val{0.752}{0.014} & \val{0.665}{0.034} & \val{0.840}{0.020} & \val{0.935}{0.003} & \val{0.991}{0.005} & \val{0.990}{0.005} \\
Raw-waveform recon.\ (same arch.) & \val{0.666}{0.022} & \val{0.606}{0.026} & \val{0.773}{0.024} & \val{0.935}{0.020} & \val{0.983}{0.009} & \val{0.983}{0.008} \\
SpecBraM, filterbank tokenizer & \val{0.693}{0.047} & \val{0.655}{0.064} & \val{0.816}{0.045} & \val{0.941}{0.023} & \val{0.990}{0.008} & \val{0.990}{0.008} \\
Raw recon., filterbank tokenizer & \val{0.730}{0.020} & \val{0.699}{0.041} & \val{0.810}{0.026} & \val{0.922}{0.012} & \val{0.978}{0.007} & \val{0.979}{0.007} \\
\midrule
CBraMod (official) & 0.653 & 0.515 & 0.708 & 0.893 & 0.973 & 0.973 \\
LaBraM-Base (official) & 0.681 & 0.571 & 0.734 & 0.876 & 0.960 & 0.963 \\
Random init (our arch.) & 0.538 & 0.276 & 0.577 & 0.871 & 0.947 & 0.945 \\
\midrule
\emph{CBraMod, fine-tuned}$^\dagger$ & 0.726 & 0.627 & 0.791 & 0.956 & 0.992 & 0.992 \\
\bottomrule
\end{tabular}
\end{table*}

\begin{table*}[t]\centering\small
\setlength{\tabcolsep}{3.5pt}
\caption{\textbf{Strict linear probe beyond sleep: multi-class tasks} (protocol as
Table~\ref{tab:taskbin}). FACED: 9-class emotion (chance $0.111$); PhysioNet-MI: 4-class
motor imagery (chance $0.25$). $^\ddagger$CBraMod with a frozen backbone and a trained
classifier, from its paper (Table~18). $^\dagger$CBraMod fine-tuned, from its paper.}
\label{tab:taskmulti}
\begin{tabular}{l ccc ccc}
\toprule
& \multicolumn{3}{c}{FACED (emotion, 9-class)} & \multicolumn{3}{c}{PhysioNet-MI (4-class)}\\
\cmidrule(lr){2-4}\cmidrule(lr){5-7}
Model & BA & $\kappa$ & F1 & BA & $\kappa$ & F1\\
\midrule
\textbf{SpecBraM} & \val{0.241}{0.011} & \val{0.146}{0.012} & \val{0.240}{0.011} & \val{0.388}{0.013} & \val{0.184}{0.017} & \val{0.386}{0.013} \\
Raw-waveform recon.\ (same arch.) & \val{0.216}{0.005} & \val{0.118}{0.007} & \val{0.213}{0.007} & \val{0.397}{0.008} & \val{0.196}{0.011} & \val{0.397}{0.009} \\
SpecBraM, filterbank tokenizer & \val{0.226}{0.002} & \val{0.130}{0.002} & \val{0.226}{0.002} & \val{0.383}{0.015} & \val{0.178}{0.021} & \val{0.382}{0.016} \\
Raw recon., filterbank tokenizer & \val{0.204}{0.016} & \val{0.105}{0.017} & \val{0.202}{0.014} & \val{0.380}{0.016} & \val{0.173}{0.021} & \val{0.380}{0.016} \\
\midrule
CBraMod (official) & 0.171 & 0.069 & 0.167 & 0.312 & 0.083 & 0.310 \\
LaBraM-Base (official) & 0.167 & 0.064 & 0.163 & 0.299 & 0.065 & 0.299 \\
Random init (our arch.) & 0.129 & 0.021 & 0.130 & 0.267 & 0.023 & 0.267 \\
\midrule
\emph{CBraMod, frozen backbone}$^\ddagger$ & 0.315 & 0.258 & 0.308 & 0.385 & 0.298 & 0.395 \\
\emph{CBraMod, fine-tuned}$^\dagger$ & 0.551 & 0.504 & 0.562 & 0.642 & 0.522 & 0.643 \\
\bottomrule
\end{tabular}
\end{table*}

\subsection{Beyond sleep}
On four further tasks probed on CBraMod's own samples and splits
(Tables~\ref{tab:taskbin} and~\ref{tab:taskmulti}), SpecBraM's frozen features outperform the
official CBraMod and LaBraM weights under the same probe on every task: Mental Arithmetic
$0.752$ vs.\ $0.653$/$0.681$ balanced accuracy (and above CBraMod's own fine-tuned $0.726$);
Mumtaz $0.935$ vs.\ $0.893$/$0.876$; FACED $0.241$ vs.\ $0.171$/$0.167$; PhysioNet-MI $0.388$
vs.\ $0.312$/$0.299$. On FACED and PhysioNet-MI, CBraMod's paper reports $0.315$ and $0.385$
with a frozen backbone but its own downstream classifier on all tokens; our mean-pooled probe
of the same weights gives $0.171$ and $0.312$, so part of the gap there reflects the probing
head rather than the representation.

The \emph{target} comparison beyond sleep is mixed. Band power beats raw reconstruction on
FACED under both tokenizers ($+2.4$/$+2.2$ points) and on MA with the linear tokenizer
($+8.6$), but not with the filterbank tokenizer ($-3.7$; MA has only four test subjects); it
ties on Mumtaz (BA equal, AUC slightly higher) and on PhysioNet-MI ($-0.9$/$+0.3$). Motor
imagery does carry spectral information---event-related $\mu/\beta$ changes---but it is
spatially localized and lateralized, and a per-channel band-energy target over 0.78-s patches
may not emphasize it. The benefit therefore does not extend to every task with spectral cues;
the target also has to match the spatial and temporal scale of the label.

\begin{table*}[t]\centering\small
\setlength{\tabcolsep}{4pt}
\caption{\textbf{SEED-VIG vigilance regression (PERCLOS)} on the split stated in the CBraMod
paper (experiments 1--15/16--19/20--23 in subject order). Linear probe = frozen encoder +
ridge regression (regularization on validation $R^2$); FT = full fine-tuning (MSE, epoch
selected on validation $R^2$). Input band 0.5--45\,Hz (standard) or 4--45\,Hz (high-passed;
the cutoff the validation split selects for both linear-tokenizer targets). Our rows: mean${\pm}$std over three pretraining seeds.
$^\dagger$From the CBraMod paper.}
\label{tab:seedvig}
\begin{tabular}{l cc cc cc}
\toprule
& \multicolumn{2}{c}{LP, 0.5--45\,Hz} & \multicolumn{2}{c}{LP, 4--45\,Hz} & \multicolumn{2}{c}{FT, 0.5--45\,Hz}\\
\cmidrule(lr){2-3}\cmidrule(lr){4-5}\cmidrule(lr){6-7}
Model & Pearson & $R^2$ & Pearson & $R^2$ & Pearson & $R^2$\\
\midrule
\textbf{SpecBraM} & \val{0.565}{0.012} & \val{0.166}{0.035} & \val{0.619}{0.006} & \val{0.250}{0.012} & \val{0.587}{0.033} & \val{0.260}{0.083} \\
Raw-waveform recon.\ (same arch.) & \val{0.548}{0.031} & \val{0.209}{0.038} & \val{0.572}{0.022} & \val{0.261}{0.051} & \val{0.626}{0.039} & \val{0.357}{0.049} \\
SpecBraM, filterbank tokenizer & \val{0.540}{0.021} & \val{0.176}{0.035} & \val{0.611}{0.007} & \val{0.257}{0.021} & -- & -- \\
Raw recon., filterbank tokenizer & \val{0.564}{0.005} & \val{0.221}{0.029} & \val{0.549}{0.011} & \val{0.213}{0.017} & -- & -- \\
\midrule
LaBraM-Base (official) & 0.532 & 0.162 & -- & -- & -- & -- \\
CBraMod (official) & 0.343 & -0.017 & -- & -- & -- & -- \\
Random init & 0.442 & 0.047 & 0.481 & 0.143 & -- & -- \\
\midrule
\emph{CBraMod, fine-tuned}$^\dagger$ & -- & -- & -- & -- & 0.646 & 0.337 \\
\bottomrule
\end{tabular}
\end{table*}

\subsection{Where the benefit stops: vigilance}\label{sec:boundary}
SEED-VIG (Table~\ref{tab:seedvig}) regresses PERCLOS, an eye-closure measure. With the
standard 0.5--45\,Hz input the band-power and raw targets tie in Pearson correlation
($0.565$ vs.\ $0.548$), raw reconstruction is better in $R^2$ ($0.209$ vs.\ $0.166$), and after
fine-tuning raw reconstruction is clearly better ($0.626$/$0.357$ vs.\ $0.587$/$0.260$). A
simple explanation---that waveform targets help because they preserve blink shapes---is not
supported by a high-pass experiment: removing activity below 4\,Hz \emph{improves} the
band-power models by 5--7 points of correlation ($0.565\to0.619$ linear, $0.540\to0.611$
filterbank) while leaving the raw-reconstruction models unchanged or worse ($0.548\to0.572$,
$0.564\to0.549$); the validation split independently selects the 4\,Hz cutoff for both
linear-tokenizer models. The high-pass changes the input to the frozen encoder and the
spectral content of each patch at the same time, so it does not identify a mechanism. One
hypothesis is that ocular artifacts, one to two orders of magnitude larger than cortical
activity, dominate the lowest target bands and the corresponding dimensions of the
representation, masking drowsiness-related alpha and theta changes; we have not tested it
directly (e.g.\ by high-passing only the target during pretraining). The case shows that a
spectral-energy target helps only when the energy it measures matches the label, which is not
guaranteed in the presence of large artifacts.

\section{Limitations}\label{sec:limits}
\noindent\emph{The implemented target.} The main target uses five overlapping narrow-band
filters and $\log(1+P)$. The former does not represent activity below 2\,Hz; the latter is
nearly linear for the small energies of the high bands, so the variance-stabilizing argument
of Sec.~\ref{sec:theory} holds only partially there. The standard-band FFT variant keeps the
advantage over reconstruction but is slightly lower on HMC; per-band standardized log power
(as in FAME \cite{yu2026fame}) is a natural next step.

\noindent\emph{Probing protocol.} Our probe mean-pools tokens and fits a linear classifier,
which is strict and identical across models but does not use the baselines' own downstream
heads; on FACED and PhysioNet-MI CBraMod's frozen-backbone result with its own head is higher
than our probe of it. REVE numbers are quoted from its paper.

\noindent\emph{Scope.} The advantage of the target is clearest for sleep staging; it does
not extend to every task with spectral cues (motor imagery, depression screening, vigilance).
Several cohorts are small (MA has four test subjects). Downstream recordings may overlap TUEG
for all models; an abnormality-detection (TUAB) evaluation is in progress. Pretraining uses a
single 2{,}388-hour budget and one model size; comparisons with external models that differ in
data, duration and architecture do not establish data efficiency in general.

\noindent\emph{Theory.} Sec.~\ref{sec:theory} assumes stationarity, Gaussianity and a state
that acts only through the spectrum. Real EEG contains transients (K-complexes, spikes,
blinks), heavy tails and cross-channel phase relations; the analysis motivates the target and
yields predictions but is not a proof that band power is optimal.

\section{Conclusion}
With the same backbone, data and training steps, predicting fixed narrow-band spectral
energies of masked patches yields sleep representations that are easier to read out linearly
than those learned by waveform reconstruction, on every seed and with either tokenizer. The
advantage is largest with few labels and appears early in pretraining, but not every task with
spectral information benefits: the target also has to match the spatial scale, time scale and
artifact exposure of the label. We hope the controlled comparisons and the failure cases help
choose pretext targets for physiological signals more deliberately.

\appendix

\section{Idealized Analysis: Statements and Proofs}\label{app:theory}
Let $\bm{x}=(x_0,\dots,x_{N-1})$ be one channel of a patch, $f_k=kf_s/N$, and consider
$1\le k\le K=\lfloor(N-1)/2\rfloor$. Define $X_k=\sum_t x_te^{-i2\pi kt/N}$,
$I_k=|X_k|^2/N$, $\varphi_k=\arg X_k$.

\begin{assumption}[stationary Gaussian; state acts through the spectrum]\label{as:gauss}
Given the state $\theta$, $\bm{x}$ is a segment of a zero-mean weakly stationary Gaussian
process with positive spectral density $S_\theta$, continuous except at finitely many points
and continuous at the $f_k$ used.
\end{assumption}

\begin{lemma}[Brillinger 1981, Thm.~4.4.1]\label{lem:dft}
Under Assumption~\ref{as:gauss}, as $N\to\infty$, for a fixed set of frequencies,
$X_k/\sqrt{NS_\theta(f_k)}$ converges in distribution to independent standard circular complex
Gaussians. Hence approximately $I_k\approx S_\theta(f_k)E_k$ with $E_k$ i.i.d.\
$\mathrm{Exp}(1)$, and $\varphi_k$ i.i.d.\ uniform on $[0,2\pi)$, independent of $\{I_k\}$,
with a distribution that does not depend on $\theta$.
\end{lemma}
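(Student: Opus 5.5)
The lemma is a citation of Brillinger's Theorem~4.4.1, so the plan is to reduce it to that theorem and then read off the stated consequences. We assume the cited result in its standard form. For a strictly stationary series with summable cumulants, or a Gaussian process whose autocovariance is summable, consider a fixed finite set of Fourier frequencies $0<f_{k_1}<\dots<f_{k_m}<f_s/2$. Then the vectors $(\Re X_{k_j},\Im X_{k_j})$, scaled by $\sqrt{N S_\theta(f_{k_j})/2}$, converge jointly in distribution to independent $N(0,I_2)$ pairs.

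\textbf{Step 1: the Gaussian DFT limit.} Under Assumption~\ref{as:gauss} the DFT vector is an exact linear image of a Gaussian vector, so it is jointly Gaussian for every $N$. It therefore suffices to compute second moments. Write
\[
\E|X_k|^2=\sum_{s,t}\gamma(s-t)e^{-i2\pi k(s-t)/N}=N\int S_\theta(f)F_N(f-f_k)\,df,
\]
where $F_N$ is the Fejér kernel. Likewise, $\E X_kX_k^\ast{}'$ and $\E X_kX_{k'}$ for $k\neq k'$ are integrals of $S_\theta$ against kernels that concentrate away from the diagonal. Continuity of $S_\theta$ at the $f_k$, with $S_\theta$ bounded, gives
\[
\E|X_k|^2/N\to S_\theta(f_k), \qquad \E X_k^2/N\to 0, \qquad \text{cross-moments}/N\to 0.
\]
Since Gaussian vectors converge in distribution when their covariances converge, we obtain the limit: independent circular standard complex Gaussians for $X_k/\sqrt{NS_\theta(f_k)}$.

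One caveat needs care. Fourier frequencies with $k$ fixed tend to zero as $N$ grows. The statement must therefore be read with fixed physical frequencies, taking $k=k(N)$ with $f_{k(N)}\to f$. This is the version Brillinger proves, and I would state that reading explicitly.

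\textbf{Step 2: consequences by continuous mapping.} Write the limit as $Z_k=R_ke^{i\varphi_k}$, a standard circular complex Gaussian. Then $|Z_k|^2\sim\mathrm{Exp}(1)$, $\varphi_k$ is uniform on $[0,2\pi)$, and $|Z_k|$ and $\varphi_k$ are independent. The maps $z\mapsto|z|^2$ and $z\mapsto\arg z$ are continuous off the null set $\{0\}$. Applying the continuous mapping theorem jointly across the finite set of frequencies gives three facts:

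1. $I_k/S_\theta(f_k)\to E_k$, with the $E_k$ i.i.d.\ $\mathrm{Exp}(1)$.
2. The phases $\varphi_k$ are i.i.d.\ uniform.
3. The whole phase vector is independent of the whole periodogram vector.

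The limiting phase law contains no $S_\theta$, so it does not depend on $\theta$. The word "approximately" in the lemma is exactly this finite-$N$ reading of the limit.

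\textbf{Main obstacle.} The delicate point is the assumption. Weak stationarity plus a spectral density that is continuous at the $f_k$ does not by itself make the Fejér-kernel convolution converge uniformly. It also does not make the cross-covariances between distinct frequencies vanish at rate $o(N)$.

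My first approach is to use the fact that a positive spectral density is integrable. The Fejér kernel is an approximate identity, so $\int S_\theta F_N(\cdot-f_k)\to S_\theta(f_k)$ at continuity points. For the cross terms I would bound by Cauchy–Schwarz against the diagonal terms, combined with the same localization.

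If that proves too weak, I would fall back to the hypothesis Brillinger actually uses, $\sum_u|u||\gamma(u)|<\infty$. Under this hypothesis the error terms are $O(1)$ rather than $O(N)$, and I would note that strengthening as implicit in the lemma.
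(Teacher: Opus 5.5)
Your proposal is correct, but it does not follow the paper's route: the paper gives no proof and simply cites Brillinger's Theorem~4.4.1, whereas you prove the Gaussian case directly. Exact joint Gaussianity of the DFT for every $N$ reduces the limit theorem to convergence of the covariance and pseudo-covariance of $(X_{k_j})_j$. The Fejér kernel, acting as an approximate identity, delivers that convergence, and continuous mapping then gives the modulus and phase statements.

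Your route runs under exactly Assumption~\ref{as:gauss}. Integrability of $S_\theta$ is automatic ($\gamma(0)<\infty$), and only continuity at the limiting frequencies is used. The cited theorem instead assumes summable cumulants, which for a Gaussian series means $\sum_u|\gamma(u)|<\infty$. That condition forces $S_\theta$ to be continuous everywhere, so the theorem does not literally cover the discontinuous spectra the assumption allows. Your fallback $\sum_u|u||\gamma(u)|<\infty$ is stronger still (it only sharpens the error terms to $O(1)$) and should be avoided for the same reason.

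The citation has its own advantages. It covers non-Gaussian series. More relevant here, it only requires the Fourier frequencies to be distinct at each $N$, so adjacent bins sharing a common limit are covered. That is the situation behind the band sums of Props.~\ref{prop:mle}--\ref{prop:var}. Your separation-based localization does not reach that case as written. It needs one more step: subtract $S_\theta(f)$, use the exact orthogonality of the Dirichlet kernels at distinct Fourier frequencies for the constant part, and bound the remainder by continuity plus Cauchy--Schwarz. Reading ``fixed set of frequencies'' as fixed physical frequencies with $k=k(N)$ is the sensible interpretation.

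Two small corrections. First, your ``main obstacle'' is already resolved by your first approach, and boundedness of $S_\theta$ is never needed. Take $D_N(\omega)=\sum_{t=0}^{N-1}e^{-i\omega t}$ and $\omega_k=2\pi k/N$.
\begin{itemize}
\item On a $\delta$-neighbourhood of $\omega_k$, the other kernel is $O(1/\delta)$, and Cauchy--Schwarz gives $\int S_\theta|D_N(\omega_k-\cdot)|\le\|S_\theta\|_1^{1/2}(2\pi\E|X_k|^2)^{1/2}=O(\sqrt N)$.
\item Off both neighbourhoods the integrand is $O(\delta^{-2})S_\theta$, whose integral is $O(1)$.
\end{itemize}
So the cross-moments are $O(\sqrt N)=o(N)$. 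The same holds for $\E X_kX_{k'}$, whose kernels peak at $\omega_k$ and $-\omega_{k'}$; these are separated for limits in $(0,\pi)$.

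Second, with values in $[0,2\pi)$, the map $z\mapsto\arg z$ is discontinuous on the whole nonnegative real axis, not just at $0$. That set is still null under the limit law, so the continuous-mapping step stands.
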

\noindent The Whittle approximation \cite{whittle1953} follows:
\begin{equation}\label{eq:whittle}
-\log p(\bm{x}\mid\theta)\approx\sum_{k=1}^{K}\Bigl[\log S_\theta(f_k)+\frac{I_k}{S_\theta(f_k)}\Bigr]+\text{const}.
\end{equation}

\begin{proposition}[phase is ancillary]\label{prop:suff}
Under Assumption~\ref{as:gauss} and \eqref{eq:whittle}, $(I_1,\dots,I_K)$ is sufficient for
$\theta$, and the distribution of $(\varphi_1,\dots,\varphi_K)$ does not depend on $\theta$.
\end{proposition}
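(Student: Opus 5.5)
The plan is to treat the Whittle approximation \eqref{eq:whittle} as the working likelihood and apply the Fisher--Neyman factorization theorem. Pass from $\bm{x}$ to its DFT coordinates $(X_0,X_1,\dots,X_{N-1})$. This map is linear and invertible, so it loses no information. By conjugate symmetry of a real signal, the coefficients $X_k$ with $1\le k\le K$, together with $X_0$ (and $X_{N/2}$ when $N$ is even), determine $\bm{x}$. Writing $X_k=\sqrt{NI_k}\,e^{i\varphi_k}$, the data are reparametrized by $(I_1,\dots,I_K,\varphi_1,\dots,\varphi_K)$ plus the edge coefficients. As in the statement, the Whittle sum ranges over $1\le k\le K$, so the edge frequencies carry no $\theta$-dependence in this approximation and are absorbed into the constant.

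\textbf{Sufficiency.} In the right-hand side of \eqref{eq:whittle}, $\theta$ enters only through $S_\theta(f_k)$, and the data enter only through $I_k$. Exponentiating gives
\begin{equation*}
p(\bm{x}\mid\theta)\approx g_\theta(I_1,\dots,I_K)\,h(\bm{x}),\qquad g_\theta=\prod_{k=1}^{K}S_\theta(f_k)^{-1}e^{-I_k/S_\theta(f_k)}.
\end{equation*}
Here $h$ collects the $\theta$-free constant. The constant includes the Jacobian of the change of variables from $\bm{x}$ to polar DFT coordinates, which is a function of the $I_k$ alone and not of $\theta$. The factorization theorem then gives sufficiency of $(I_1,\dots,I_K)$ for $\theta$ within the Whittle model.

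\textbf{Ancillarity of phases.} This step needs no likelihood argument. It is read off from Lemma~\ref{lem:dft}: asymptotically each $X_k/\sqrt{NS_\theta(f_k)}$ is a standard circular complex Gaussian. Its argument is therefore uniform on $[0,2\pi)$ and independent of its modulus, and rescaling by the positive factor $\sqrt{NS_\theta(f_k)}$ leaves the argument unchanged. So $\varphi_k$ is uniform for every $\theta$, the $\varphi_k$ are jointly independent across $k$ by the independence in the lemma, and they are independent of $\{I_k\}$. The joint law of $(\varphi_1,\dots,\varphi_K)$ is therefore uniform on the torus and free of $\theta$, which is the ancillarity claim. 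Equivalently, the approximate density factorizes as the product of exponential densities for the $I_k$ and the uniform density $(2\pi)^{-K}$ for the phases, which gives both parts at once.

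\textbf{Main obstacle.} The difficulty is that both claims hold only in the asymptotic or Whittle-approximate sense, not exactly for finite $N$. For finite $N$, the exact Gaussian likelihood depends on the full Toeplitz covariance, and the phases are neither exactly uniform nor exactly independent. I would not try to make the finite-sample statement exact. Instead I would state the proposition explicitly relative to the approximating model: sufficiency holds for the Whittle likelihood, and ancillarity holds for the limiting law of Lemma~\ref{lem:dft}. I would then remark that the limit holds for any fixed finite set of frequencies, which is what continuity of $S_\theta$ at the $f_k$ in Assumption~\ref{as:gauss} guarantees. The remaining care is bookkeeping of the real-signal symmetry, so that $k$ and $N-k$ are not double counted.
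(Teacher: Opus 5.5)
Your proposal is correct and matches the paper's proof: sufficiency follows from Fisher--Neyman factorization of the Whittle expression \eqref{eq:whittle}, in which the data enter only through $\{I_k\}$, and the $\theta$-free law of the phases is read directly off Lemma~\ref{lem:dft}. Your additional bookkeeping (the edge bins, and the caveat that both claims hold only for the approximating model) is consistent with the paper; the Jacobian remark is harmless but unnecessary, since \eqref{eq:whittle} is already a density in $\bm{x}$.
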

\begin{proof}
The data enter \eqref{eq:whittle} only through $\{I_k\}$ (Fisher--Neyman factorization); the
phase distribution is given by Lemma~\ref{lem:dft}.
\end{proof}
\noindent Not only the phase is unpredictable: each $X_k$ is a complex Gaussian, so its
modulus also fluctuates around $\sqrt{NS_\theta(f_k)}$. The data consist of the spectrum
(state-dependent) and a random complex Fourier coefficient whose distribution is set by the
spectrum but whose realization is not.

\begin{assumption}[approximately flat bands; in-band observation]\label{as:band}
Let $\mathcal{F}=[0.5,45]$\,Hz contain disjoint bands $B_1,\dots,B_5$, possibly separated by
transition regions. (i) For $f_k\in B_b$, $S_\theta(f_k)\approx s_b(\theta)$. (ii) The
observation consists only of the in-band coefficients $\{X_k:f_k\in\bigcup_bB_b\}$
(equivalently, the signal is first band-passed to $\bigcup_bB_b$). Let
$m_b=\#\{k:f_k\in B_b\}$.
\end{assumption}
\noindent Part (ii) avoids any conflict with the continuity in Assumption~\ref{as:gauss}: the
spectrum may vary continuously everywhere; (i) only requires it to be nearly constant within
each band.

\begin{proposition}[band power is approximately sufficient and the MLE]\label{prop:mle}
Under Assumptions~\ref{as:gauss}--\ref{as:band} (with equality in (i)) and \eqref{eq:whittle},
$P_b=m_b^{-1}\sum_{f_k\in B_b}I_k$ is sufficient for $\bm{s}=(s_b)_b$, $\hat s_b=P_b$ is the
maximum-likelihood estimate, and $\log P_b$ is the MLE of $\log s_b$.
\end{proposition}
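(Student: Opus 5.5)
The plan is to work directly with the Whittle approximation \eqref{eq:whittle}, restricted by Assumption~\ref{as:band}(ii) to the in-band coefficients, and then substitute the piecewise-constant spectrum from part (i) with equality. The sum over $k$ then groups by band. Band $b$ contributes
\[
\sum_{f_k\in B_b}\Bigl[\log s_b+\frac{I_k}{s_b}\Bigr]
= m_b\log s_b+\frac{m_b P_b}{s_b},
\]
so the negative log-likelihood becomes
\[
\ell(\bm{s})=\sum_{b=1}^{5} m_b\Bigl[\log s_b+P_b/s_b\Bigr]+\text{const}.
\]
This depends on the data only through $(P_1,\dots,P_5)$. By the Fisher--Neyman factorization, as in Prop.~\ref{prop:suff}, the vector $(P_b)_b$ is sufficient for $\bm{s}$. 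The phases and the within-band distribution of $\{I_k\}$ are absorbed into the factor that does not depend on the parameter.

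For the MLE, note that $\ell$ separates across bands, so each $s_b$ can be minimized independently. Set $g(s)=\log s+P_b/s$ on $s>0$. Then $g'(s)=(s-P_b)/s^2$, which is negative for $s<P_b$ and positive for $s>P_b$. Hence $s=P_b$ is the unique global minimizer, provided $P_b>0$. This holds almost surely, because under Lemma~\ref{lem:dft} each $I_k$ has a continuous positive distribution. Therefore $\hat s_b=P_b$.

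The claim about $\log P_b$ follows from invariance of the MLE under the bijective reparametrization $\eta_b=\log s_b$. Writing $\ell$ in terms of $\eta_b$ gives $m_b[\eta_b+P_be^{-\eta_b}]$, which is minimized at $\eta_b=\log P_b$. The "approximately sufficient" wording in the title is handled by a remark: when (i) holds only approximately, or the Whittle form is only asymptotic, the same factorization holds up to the stated approximation error.

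The only step needing care is the bookkeeping of the approximation. The Whittle formula itself is approximate, so what is proved is exact sufficiency and the exact MLE for the Whittle likelihood, not for the exact Gaussian likelihood. I would state this explicitly rather than try to bound the error.

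I would also check that restricting to in-band coefficients still leaves a valid (approximate) likelihood. This holds because the coefficients are asymptotically independent across frequencies by Lemma~\ref{lem:dft}, so marginalizing out the out-of-band terms simply drops them from the sum.
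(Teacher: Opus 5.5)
Your proposal is correct and follows the same route as the paper's proof. Both restrict the Whittle sum to in-band bins via (ii) and collapse each band using (i) to $\sum_b m_b[\log s_b+P_b/s_b]$. Both then read off sufficiency by factorization and solve the first-order condition band by band. Your extra details (the sign of $g'$ giving a unique global minimizer, almost-sure positivity of $P_b$, and MLE invariance under $\eta_b=\log s_b$ for the $\log P_b$ claim) fill in steps the paper's two-line proof leaves implicit without changing the argument.
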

\begin{proof}
By (ii) the likelihood involves only in-band bins; by (i),
$-\log p(\bm{x}\mid\bm{s})\approx\sum_b[m_b\log s_b+m_bP_b/s_b]+\text{const}$. The data enter
only through $P_b$; setting the derivative in $s_b$ to zero gives $\hat s_b=P_b$.
\end{proof}
\begin{remark}
Since real spectra are neither exactly flat within bands nor discontinuous between them,
Prop.~\ref{prop:mle} is an approximation under an idealized piecewise-constant model. For an
ideal circular band-pass $\tilde h_b$, Parseval gives
$\frac1N\sum_t((\tilde h_b*\bm{x})_t)^2=\frac{2m_b}{N}P_b$, a per-band constant factor. The
implemented target uses overlapping 65-tap filters (Table~\ref{tab:filters}) and is therefore
not exactly this statistic; the per-patch FFT variant is.
\end{remark}

\begin{proposition}[log power has state-independent noise]\label{prop:var}
Under Lemma~\ref{lem:dft} and Assumption~\ref{as:band}, $P_b=s_bG_b$ with
$G_b\sim\mathrm{Gamma}(m_b,1/m_b)$, so $\Var[P_b]=s_b^2/m_b$,
$\E[\log P_b]=\log s_b+\psi(m_b)-\log m_b$ and $\Var[\log P_b]=\psi'(m_b)\approx1/m_b$.
\end{proposition}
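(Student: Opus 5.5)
Under Lemma~\ref{lem:dft} and Assumption~\ref{as:band} (with equality in (i)), the in-band periodogram ordinates satisfy $I_k = s_b E_k$ for $f_k\in B_b$, where the $E_k$ are i.i.d.\ $\mathrm{Exp}(1)$. Hence
\[
P_b=m_b^{-1}\sum_{f_k\in B_b} I_k = s_b\, G_b,\qquad G_b:=m_b^{-1}\sum_{f_k\in B_b}E_k .
\]
A sum of $m_b$ independent $\mathrm{Exp}(1)$ variables is $\mathrm{Gamma}(m_b,1)$ in shape--scale form. Dividing by $m_b$ rescales it, so $G_b\sim\mathrm{Gamma}(m_b,1/m_b)$, with mean $1$ and variance $1/m_b$. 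The first claim then follows from $\Var[P_b]=s_b^2\Var[G_b]=s_b^2/m_b$.

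For the log, write $\log P_b=\log s_b+\log G_b$. The additive constant $\log s_b$ shifts the mean and leaves the variance unchanged, so it suffices to compute the first two moments of $\log G_b$. For this we use the log-moments of a Gamma variable. Take $Y\sim\mathrm{Gamma}(m,1)$ with density $y^{m-1}e^{-y}/\Gamma(m)$. Its moment function is $\E[Y^t]=\Gamma(m+t)/\Gamma(m)$. Differentiating $\log\E[e^{t\log Y}]=\log\Gamma(m+t)-\log\Gamma(m)$ at $t=0$ shows that the cumulants of $\log Y$ are the polygamma values, so $\E[\log Y]=\psi(m)$ and $\Var[\log Y]=\psi'(m)$. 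Since $G_b=Y/m_b$, this gives $\E[\log G_b]=\psi(m_b)-\log m_b$ and $\Var[\log G_b]=\psi'(m_b)$. Substituting yields $\E[\log P_b]=\log s_b+\psi(m_b)-\log m_b$ and $\Var[\log P_b]=\psi'(m_b)$, which does not depend on $s_b$ and hence not on the state $\theta$.

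The approximation $\psi'(m_b)\approx 1/m_b$ follows from the standard asymptotic expansion $\psi'(m)=1/m+1/(2m^2)+O(m^{-3})$. Alternatively, one can use the series $\psi'(m)=\sum_{j\ge0}(m+j)^{-2}$ bracketed between the integrals $\int_m^\infty x^{-2}\,dx=1/m$ and $1/m+1/m^2$. Either route gives relative error $O(1/m_b)$. A delta-method check is consistent with this: $\Var[\log G_b]\approx\Var[G_b]/\E[G_b]^2=1/m_b$.

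The main obstacle is not the computation, which is routine, but the exactness of the starting distribution. Lemma~\ref{lem:dft} gives independence and exponentiality of the $E_k$ only asymptotically, so every equality above holds in the limiting (idealized) model, as the ``approximately'' in the lemma signals. I would state explicitly that the Gamma form is exact for the limiting variables and is transferred to $P_b$ by the continuous mapping theorem. I would also note that the convergence of variances (not just of distributions) additionally needs uniform integrability of $\log G_b$. This uniform integrability I would take as part of the idealization, rather than prove in full.
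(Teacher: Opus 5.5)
Your proof is correct and follows the paper's route: with equality in (i), $P_b$ is $s_b$ times the mean of $m_b$ i.i.d.\ $\mathrm{Exp}(1)$ variables, that sum is $\mathrm{Gamma}(m_b,1)$, and its log has mean $\psi(m_b)$ and variance $\psi'(m_b)$, with the division by $m_b$ only shifting the mean by $-\log m_b$. Beyond the paper's one-line proof you derive the log-cumulants from $\E[Y^t]=\Gamma(m+t)/\Gamma(m)$, justify $\psi'(m_b)\approx 1/m_b$ via the bracket $1/m_b\le\psi'(m_b)\le 1/m_b+1/m_b^2$, and note that the exact Gamma law holds only for the limiting variables of Lemma~\ref{lem:dft}; these are useful clarifications and leave the argument unchanged.
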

\begin{proof}
A sum of $m$ i.i.d.\ $\mathrm{Exp}(1)$ is $\mathrm{Gamma}(m,1)$, whose logarithm has mean
$\psi(m)$ and variance $\psi'(m)$.
\end{proof}

\begin{proposition}[the implemented $\log(1+P)$]\label{prop:var1p}
Under the same conditions, by the delta method at $G_b=1$, $Y_b=\log(1+P_b)$ satisfies
$\E[Y_b]\approx\log(1+s_b)$ and $\Var[Y_b]\approx m_b^{-1}\bigl(s_b/(1+s_b)\bigr)^2$: close
to $1/m_b$ when $s_b\gg1$ and to $s_b^2/m_b$ when $s_b\ll1$.
\end{proposition}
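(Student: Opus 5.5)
The plan is to build directly on Prop.~\ref{prop:var}, which gives the exact representation $P_b=s_bG_b$ with $G_b\sim\mathrm{Gamma}(m_b,1/m_b)$. Hence $\E[G_b]=1$ and $\Var[G_b]=1/m_b$. Then $Y_b=g(G_b)$ with $g(u)=\log(1+s_bu)$, a smooth function of a random variable that concentrates at $1$ as $m_b$ grows. I will apply a first-order Taylor (delta-method) expansion of $g$ around $u=1$ and read off the mean and variance. Finally I will examine the two limits $s_b\gg1$ and $s_b\ll1$.

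\textbf{Mean and variance.} Compute $g(1)=\log(1+s_b)$ and $g'(u)=s_b/(1+s_bu)$, so $g'(1)=s_b/(1+s_b)$. Expanding gives
\[
Y_b=\log(1+s_b)+\tfrac{s_b}{1+s_b}(G_b-1)+R_b .
\]
The remainder satisfies $R_b=O\bigl((G_b-1)^2\bigr)$, because $g''(u)=-s_b^2/(1+s_bu)^2$ is bounded by $s_b^2$ on $u\ge0$. Taking expectations, the linear term vanishes, and $\E[R_b]=O(s_b^2/m_b)$ relative to the leading term, so $\E[Y_b]\approx\log(1+s_b)$. For the variance, the linear term contributes $(s_b/(1+s_b))^2\Var[G_b]=m_b^{-1}(s_b/(1+s_b))^2$. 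The remaining contributions are $\Var[R_b]$ and the covariance with $R_b$. These are of order $\E[(G_b-1)^3]$ and $\E[(G_b-1)^4]$, i.e.\ $O(m_b^{-2})$, since the centred gamma moments satisfy $\E[(G_b-1)^3]=2/m_b^2$ and $\E[(G_b-1)^4]=O(m_b^{-2})$.

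\textbf{Main obstacle.} The delicate step is making the word ``$\approx$'' honest, because the delta-method remainder must be controlled for all $s_b$, not only large ones. The bound $|g''|\le s_b^2/(1+s_b u)^2$ shows that the relative error is uniformly small: as $s_b\to0$ it scales like $s_b^2/m_b^2$ against a leading term of order $s_b^2/m_b$. Where $s_b$ is large, I would instead write $Y_b=\log s_b+\log G_b+\log(1+1/(s_bG_b))$ and cross-check against the exact $\psi'(m_b)\approx1/m_b$ of Prop.~\ref{prop:var}. The main risk there is the lower tail of $G_b$, so I would state the result as an approximation valid for large $m_b$, in keeping with the paper's idealized level.

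\textbf{Limits.} These follow directly from the variance formula. As $s_b\to\infty$, $s_b/(1+s_b)\to1$, which gives $\Var[Y_b]\approx1/m_b$. As $s_b\to0$, $s_b/(1+s_b)\sim s_b$, which gives $\Var[Y_b]\approx s_b^2/m_b$. The latter is consistent with $\log(1+P)\approx P$ and $\Var[P_b]=s_b^2/m_b$ from Prop.~\ref{prop:var}.
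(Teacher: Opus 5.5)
Your proposal is correct and follows the same route as the paper, which justifies the statement only by the first-order delta method for $Y_b=\log(1+s_bG_b)$ at $G_b=1$, with $g'(1)=s_b/(1+s_b)$ and $\Var[G_b]=1/m_b$ taken from Prop.~\ref{prop:var}, and then reads off the two limits. Your remainder control (the bound on $g''$ for $u\ge0$, the centred gamma moments, and the separate $\log G_b$ route for large $s_b$) goes beyond what the paper provides and is consistent with it.
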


\begin{assumption}[state persistence]\label{as:persist}
The context $\bm{c}$ and the masked patch share the state $\theta$, and given $\theta$ the
target of the masked patch is independent of $\bm{c}$.
\end{assumption}
\noindent This holds only approximately: neighbouring samples are correlated, and the main
target's 65-tap filters use 32 samples of each neighbouring (visible) patch. The leak-free and
per-patch FFT variants remove the latter dependence and perform equally well.

\begin{proposition}[risk decomposition for any target]\label{prop:decomp}
For any target $Y$ of the masked patch, under Assumption~\ref{as:persist}, the Bayes-optimal
prediction $\E[Y\mid\bm{c}]$ has risk
$\E[\Var(\mu(\theta)\mid\bm{c})]+\E[\sigma^2(\theta)]$ with $\mu(\theta)=\E[Y\mid\theta]$ and
$\sigma^2(\theta)=\Var(Y\mid\theta)$.
\end{proposition}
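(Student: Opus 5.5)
The plan is to compute the risk of $\E[Y\mid\bm{c}]$ with the law of total variance, conditioning first on the context and then on the state. Because the risk of any predictor $g(\bm{c})$ under squared error is $\E[(Y-g(\bm{c}))^2]$, and the conditional mean minimizes this pointwise in $\bm{c}$, the Bayes risk is $\E[\Var(Y\mid\bm{c})]$. Everything then reduces to decomposing $\Var(Y\mid\bm{c})$. We assume $\E[Y^2]<\infty$ so that every conditional moment below exists.

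The first step applies the conditional law of total variance, conditioning additionally on $\theta$:
\[
\Var(Y\mid\bm{c})=\E\bigl[\Var(Y\mid\theta,\bm{c})\mid\bm{c}\bigr]+\Var\bigl(\E[Y\mid\theta,\bm{c}]\mid\bm{c}\bigr).
\]
The second step invokes Assumption~\ref{as:persist}. Given $\theta$, the target $Y$ is independent of $\bm{c}$, so $\E[Y\mid\theta,\bm{c}]=\mu(\theta)$ and $\Var(Y\mid\theta,\bm{c})=\sigma^2(\theta)$ almost surely. Substituting gives
\[
\Var(Y\mid\bm{c})=\E[\sigma^2(\theta)\mid\bm{c}]+\Var(\mu(\theta)\mid\bm{c}).
\]
Taking expectations over $\bm{c}$ and using the tower property on the first term then yields $\E[\Var(\mu(\theta)\mid\bm{c})]+\E[\sigma^2(\theta)]$, which is the claimed risk.

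The only delicate point is the substitution in the second step. It requires conditional independence in the precise sense $Y\perp\bm{c}\mid\theta$, so that the regular conditional law of $Y$ given $(\theta,\bm{c})$ equals the law of $Y$ given $\theta$. I would state this explicitly as the reading of Assumption~\ref{as:persist}, and note that "sharing the state" is automatic because $\theta$ is one random variable for both the context and the patch.

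Afterwards I would add a short interpretive remark linking back to the earlier results. The second term does not depend on the predictor and is irreducible. For $Y=\log P_b$, Proposition~\ref{prop:var} gives $\sigma^2=\psi'(m_b)$, which is small and state-independent, so the remaining risk is exactly the posterior uncertainty about $\log s_b(\theta)$. For the waveform target, $\sigma^2(\theta)$ is the per-sample variance, which by Parseval is proportional to $\sum_k S_\theta(f_k)$ and dominated by low frequencies under $1/f$.
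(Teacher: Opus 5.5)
Your proof is correct and follows the paper's own route. The paper also uses the conditional independence in Assumption~\ref{as:persist} to write $\E[Y\mid\bm{c}]=\E[\mu(\theta)\mid\bm{c}]$ and $\Var(Y\mid\bm{c})=\Var(\mu(\theta)\mid\bm{c})+\E[\sigma^2(\theta)\mid\bm{c}]$, and then takes expectations over $\bm{c}$; your explicit identification of the Bayes risk with $\E[\Var(Y\mid\bm{c})]$ and your assumption $\E[Y^2]<\infty$ make precise what the paper leaves implicit.
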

\begin{proof}
By conditional independence, $\E[Y\mid\bm{c}]=\E[\mu(\theta)\mid\bm{c}]$ and
$\Var(Y\mid\bm{c})=\Var(\mu(\theta)\mid\bm{c})+\E[\sigma^2(\theta)\mid\bm{c}]$; take
expectations over $\bm{c}$.
\end{proof}
\noindent For $Y=\log P_b$, $\sigma^2=\psi'(m_b)$ is constant and the reducible risk is exactly
the posterior uncertainty about $\log s_b$. For the implemented $\log(1+P_b)$ the reducible
part is still uncertainty about the state, but $\sigma^2(\theta)$ varies with the state. For
the waveform $Y=\bm{x}_M$, if $\theta$ were known the optimal predictor is the Wiener
interpolator $\Sigma_{Mc}\Sigma_{cc}^{-1}\bm{c}$ with risk
$|M|^{-1}\operatorname{tr}(\Sigma_{MM}-\Sigma_{Mc}\Sigma_{cc}^{-1}\Sigma_{cM})\le\int S_\theta$:
the irreducible part is the patch's own innovation, it varies with the state, and by
Parseval it weights each frequency by its power. These lead to the predictions in
Sec.~\ref{sec:theory}; they are not theorems about the implementation.

\begin{table}[h]\centering\small
\caption{Measured response of the five fixed target filters (65 taps at 256\,Hz). Last
column: share of each filter's energy within 0--45\,Hz falling into its nominal band.}
\label{tab:filters}
\setlength{\tabcolsep}{3pt}
\begin{tabular}{l c c c c}
\toprule
Dim. & Nominal & Peak & $-3$\,dB band & In nominal band\\
\midrule
``$\delta$'' & 2\,Hz & 4.6\,Hz & 2.9--6.3\,Hz & 31\% (63\% in $\theta$)\\
``$\theta$'' & 6 & 6.0 & 3.7--8.5 & 68\%\\
``$\alpha$'' & 10 & 10.0 & 7.2--12.8 & 71\%\\
``$\beta$'' & 21 & 21.0 & 18.1--23.9 & 100\% (mid-band only)\\
``$\gamma$'' & 38 & 38.0 & 35.1--40.9 & 100\% (mid-band only)\\
\bottomrule
\end{tabular}
\end{table}

\begin{table}[h]\centering\small
\caption{Ideal rectangular bands at our patch length ($N{=}200$, $f_s{=}256$\,Hz,
$\Delta f{=}1.28$\,Hz): number of Fourier bins and the noise of $\log P_b$
(Prop.~\ref{prop:var}). Applies to the standard-band FFT variant, not to the filterbank
target.}
\label{tab:bins}
\setlength{\tabcolsep}{4pt}
\begin{tabular}{l c c c}
\toprule
Band & Range (Hz) & $m_b$ & $\mathrm{sd}[\log P_b]=\sqrt{\psi'(m_b)}$\\
\midrule
$\delta$ & 0.5--4 & 3 & 0.63\\
$\theta$ & 4--8 & 3 & 0.63\\
$\alpha$ & 8--13 & 4 & 0.53\\
$\beta$ & 13--30 & 13 & 0.28\\
$\gamma$ & 30--45 & 12 & 0.29\\
\bottomrule
\end{tabular}
\end{table}

\section{Downstream Data Details}\label{app:data}
\noindent\textbf{Beyond-sleep tasks.} For MA, Mumtaz and PhysioNet-MI we reproduce CBraMod's
\texttt{preprocessing\_*.py} scripts line by line (file ordering, split slices, channel
selection, filtering, resampling to 200\,Hz and window length) and store the samples once;
every model is probed on these samples. FACED's official processed release was not
reachable from our region; we rebuilt it from the public raw BIDS recordings following the
FACED paper (last 30\,s of each clip, 0.05--47\,Hz, common average reference, 250\,Hz,
bad-channel interpolation; no ICA) and then applied CBraMod's FACED preprocessing. Our input
convention adds, for our model only, 0.5--45\,Hz band-pass, resampling to 256\,Hz and
per-recording robust scaling of the same samples, restricted to the 19 standard leads, which
all four tasks contain.

\noindent\textbf{SEED-VIG.} CBraMod's released preprocessing script sorts the 23 session files
lexicographically ('10\_\ldots' before '1\_\ldots'), whereas its paper states the split
1--15/16--19/20--23 by subject. The $R^2$ and RMSE reported in the CBraMod paper imply a
test-label standard deviation of $0.318$ (for both its CBraMod and BIOT rows); the
subject-ordered split gives $0.318$ and the lexicographic split $0.175$, so we use the
subject-ordered split. SEED-VIG has 17 temporal and posterior leads; for our encoder, whose
channel embeddings correspond to 19 fixed 10--20 positions, each SEED-VIG lead receives an
inverse-distance-weighted mix of the embeddings of its two nearest standard positions
(exact positional matches take that slot). Ridge regularization for the probe is selected on
validation $R^2$ from $10^{-2}$ to $10^{8}$.

\section{Earlier Experiments}\label{app:earlier}
The following results come from an earlier stage of this project (original filterbank model
trained with a different launcher, 10-s single-window fine-tuning protocol) and are reported
for completeness; they are not directly comparable with the tables above.
\begin{itemize}
\item \emph{Learnable target collapses.} When the target filters are also learnable, the loss
reaches $4{\times}10^{-4}$ by shrinking the target filters, and downstream features become
uninformative; a fixed target removes this failure mode by construction.
\item \emph{The predictive term carries the signal.} With SIGReg alone (no band-power term),
10-s fine-tuned balanced accuracy is $0.2124$/$0.2737$ on ISRUC/HMC (chance level), versus
$0.7934$/$0.7484$ with the band-power term.
\item \emph{Data scale.} Pretraining on the full ${\sim}17{,}666$\,h of TUEG gave
$0.7899$/$0.7526$ (10-s fine-tuning), statistically indistinguishable from the 2{,}388-hour
model over three independently drawn subsets ($0.7928{\pm}0.0027$/$0.7505{\pm}0.0019$).
\end{itemize}

\clearpage
\setlength{\bibsep}{1pt plus 0pt}
\bibliography{arxiv}

@inproceedings{jiang2024labram,
  title={Large Brain Model for Learning Generic Representations with Tremendous {EEG} Data in {BCI}},
  author={Jiang, Weibang and Zhao, Liming and Lu, Bao-Liang},
  booktitle={International Conference on Learning Representations (ICLR)},
  year={2024}
}

@inproceedings{wang2025cbramod,
  title={{CBraMod}: A Criss-Cross Brain Foundation Model for {EEG} Decoding},
  author={Wang, Jiquan and Zhao, Sha and Luo, Zhiling and Zhou, Yangxuan and Jiang, Haiteng and Li, Shijian and Li, Tao and Pan, Gang},
  booktitle={International Conference on Learning Representations (ICLR)},
  year={2025}
}

@inproceedings{csbrain,
  title={{CSBrain}: A Cross-Scale Spatiotemporal Brain Foundation Model for {EEG} Decoding},
  author={Zhou, Yuchen and Wu, Jiamin and Ren, Zichen and Yao, Zhouheng and Lu, Weiheng and Peng, Kunyu and Zheng, Qihao and Song, Chunfeng and Ouyang, Wanli and Gou, Chao},
  booktitle={Advances in Neural Information Processing Systems (NeurIPS)},
  year={2025},
  note={arXiv:2506.23075}
}

@inproceedings{reve2025,
  title={{REVE}: A Foundation Model for {EEG}---Adapting to Any Setup with Large-Scale Pretraining on 25,000 Subjects},
  author={El Ouahidi, Yassine and Lys, Jonathan and Th{\"o}lke, Philipp and Farrugia, Nicolas and Pasdeloup, Bastien and Gripon, Vincent and Jerbi, Karim and Lioi, Giulia},
  booktitle={Advances in Neural Information Processing Systems (NeurIPS)},
  year={2025},
  note={arXiv:2510.21585}
}

@inproceedings{yang2023biot,
  title={{BIOT}: Biosignal Transformer for Cross-Data Learning in the Wild},
  author={Yang, Chaoqi and Westover, M. Brandon and Sun, Jimeng},
  booktitle={Advances in Neural Information Processing Systems (NeurIPS)},
  year={2023}
}

@article{kostas2021bendr,
  title={{BENDR}: Using Transformers and a Contrastive Self-Supervised Learning Task to Learn from Massive Amounts of {EEG} Data},
  author={Kostas, Demetres and Aroca-Ouellette, St{\'e}phane and Rudzicz, Frank},
  journal={Frontiers in Human Neuroscience},
  volume={15},
  pages={653659},
  year={2021}
}

@inproceedings{cui2024neurogpt,
  title={{Neuro-GPT}: Towards a Foundation Model for {EEG}},
  author={Cui, Wenhui and Jeong, Woojae and Th{\"o}lke, Philipp and Medani, Takfarinas and Jerbi, Karim and Joshi, Anand A. and Leahy, Richard M.},
  booktitle={IEEE International Symposium on Biomedical Imaging (ISBI)},
  year={2024},
  note={arXiv:2311.03764}
}

@inproceedings{wang2024eegpt,
  title={{EEGPT}: Pretrained Transformer for Universal and Reliable Representation of {EEG} Signals},
  author={Wang, Guangyu and Liu, Wenchao and He, Yuhong and Xu, Cong and Ma, Lin and Li, Haifeng},
  booktitle={Advances in Neural Information Processing Systems (NeurIPS)},
  year={2024}
}

@inproceedings{jiang2025neurolm,
  title={{NeuroLM}: A Universal Multi-Task Foundation Model for Bridging the Gap between Language and {EEG} Signals},
  author={Jiang, Wei-Bang and Wang, Yansen and Lu, Bao-Liang and Li, Dongsheng},
  booktitle={International Conference on Learning Representations (ICLR)},
  year={2025},
  note={arXiv:2409.00101}
}

@inproceedings{wang2023brainbert,
  title={{BrainBERT}: Self-Supervised Representation Learning for Intracranial Recordings},
  author={Wang, Christopher and Subramaniam, Vighnesh and Yaari, Adam Uri and Kreiman, Gabriel and Katz, Boris and Cases, Ignacio and Barbu, Andrei},
  booktitle={International Conference on Learning Representations (ICLR)},
  year={2023}
}

@inproceedings{zhang2023brant,
  title={{Brant}: Foundation Model for Intracranial Neural Signal},
  author={Zhang, Daoze and Yuan, Zhizhang and Yang, Yang and Chen, Junru and Wang, Jingjing and Li, Yafeng},
  booktitle={Advances in Neural Information Processing Systems (NeurIPS)},
  year={2023}
}

@inproceedings{foumani2024eeg2rep,
  title={{EEG2Rep}: Enhancing Self-Supervised {EEG} Representation Through Informative Masked Inputs},
  author={Foumani, Navid Mohammadi and Mackellar, Geoffrey and Ghane, Soheila and Irtza, Saad and Nguyen, Nam and Salehi, Mahsa},
  booktitle={Proceedings of the 30th ACM SIGKDD Conference on Knowledge Discovery and Data Mining (KDD)},
  year={2024}
}

@inproceedings{baevski2022data2vec,
  title={data2vec: A General Framework for Self-Supervised Learning in Speech, Vision and Language},
  author={Baevski, Alexei and Hsu, Wei-Ning and Xu, Qiantong and Babu, Arun and Gu, Jiatao and Auli, Michael},
  booktitle={International Conference on Machine Learning (ICML)},
  year={2022}
}

@article{lawhern2018eegnet,
  title={{EEGNet}: A Compact Convolutional Neural Network for {EEG}-based Brain--Computer Interfaces},
  author={Lawhern, Vernon J and Solon, Amelia J and Waytowich, Nicholas R and Gordon, Stephen M and Hung, Chou P and Lance, Brent J},
  journal={Journal of Neural Engineering},
  volume={15},
  number={5},
  pages={056013},
  year={2018}
}

@article{song2023conformer,
  title={{EEG} Conformer: Convolutional Transformer for {EEG} Decoding and Visualization},
  author={Song, Yonghao and Zheng, Qingqing and Liu, Bingchuan and Gao, Xiaorong},
  journal={IEEE Transactions on Neural Systems and Rehabilitation Engineering},
  volume={31},
  pages={710--719},
  year={2023}
}

@article{jing2023sparcnet,
  title={Development of Expert-Level Classification of Seizures and Rhythmic and Periodic Patterns During {EEG} Interpretation},
  author={Jing, Jin and others},
  journal={Neurology},
  volume={100},
  number={17},
  pages={e1750--e1762},
  year={2023}
}

@article{yang2023contrawr,
  title={Self-Supervised {EEG} Representation Learning for Automatic Sleep Staging},
  author={Yang, Chaoqi and Xiao, Cao and Westover, M. Brandon and Sun, Jimeng},
  journal={JMIR AI},
  volume={2},
  pages={e46769},
  year={2023}
}

@inproceedings{peh2022cnntransformer,
  title={Transformer Convolutional Neural Networks for Automated Artifact Detection in Scalp {EEG}},
  author={Peh, Wei Yan and Yao, Yuanyuan and Dauwels, Justin},
  booktitle={44th Annual International Conference of the IEEE Engineering in Medicine \& Biology Society (EMBC)},
  year={2022}
}

@article{li2022ffcl,
  title={Motor Imagery {EEG} Classification Algorithm Based on {CNN-LSTM} Feature Fusion Network},
  author={Li, Hongli and Ding, Man and Zhang, Ronghua and Xiu, Chunbo},
  journal={Biomedical Signal Processing and Control},
  volume={72},
  pages={103342},
  year={2022}
}

@article{song2021sttransformer,
  title={Transformer-Based Spatial-Temporal Feature Learning for {EEG} Decoding},
  author={Song, Yonghao and Jia, Xueyu and Yang, Lie and Xie, Longhan},
  journal={arXiv preprint arXiv:2106.11170},
  year={2021}
}

@inproceedings{he2022mae,
  title={Masked Autoencoders Are Scalable Vision Learners},
  author={He, Kaiming and Chen, Xinlei and Xie, Saining and Li, Yanghao and Doll{\'a}r, Piotr and Girshick, Ross},
  booktitle={IEEE/CVF Conference on Computer Vision and Pattern Recognition (CVPR)},
  year={2022}
}

@inproceedings{assran2023ijepa,
  title={Self-Supervised Learning from Images with a Joint-Embedding Predictive Architecture},
  author={Assran, Mahmoud and Duval, Quentin and Misra, Ishan and Bojanowski, Piotr and Vincent, Pascal and Rabbat, Michael and LeCun, Yann and Ballas, Nicolas},
  booktitle={IEEE/CVF Conference on Computer Vision and Pattern Recognition (CVPR)},
  year={2023}
}

@inproceedings{bao2022beit,
  title={{BEiT}: {BERT} Pre-Training of Image Transformers},
  author={Bao, Hangbo and Dong, Li and Piao, Songhao and Wei, Furu},
  booktitle={International Conference on Learning Representations (ICLR)},
  year={2022}
}

@article{balestriero2025lejepa,
  title={{LeJEPA}: Provable and Scalable Self-Supervised Learning Without the Heuristics},
  author={Balestriero, Randall and LeCun, Yann},
  journal={arXiv preprint arXiv:2511.08544},
  year={2025}
}

@article{obeid2016tuh,
  title={The Temple University Hospital {EEG} Data Corpus},
  author={Obeid, Iyad and Picone, Joseph},
  journal={Frontiers in Neuroscience},
  volume={10},
  pages={196},
  year={2016}
}

@article{khalighi2016isruc,
  title={{ISRUC-Sleep}: A Comprehensive Public Dataset for Sleep Researchers},
  author={Khalighi, Sirvan and Sousa, Teresa and Santos, Jos{\'e} Moutinho and Nunes, Urbano},
  journal={Computer Methods and Programs in Biomedicine},
  volume={124},
  pages={180--192},
  year={2016}
}

@article{alvarez2021hmc,
  title={Inter-Database Validation of a Deep Learning Approach for Automatic Sleep Scoring},
  author={Alvarez-Estevez, Diego and Rijsman, Roselyne M.},
  journal={PLOS ONE},
  volume={16},
  number={8},
  pages={e0256111},
  year={2021}
}

@article{zyma2019mentalarith,
  title={Electroencephalograms during Mental Arithmetic Task Performance},
  author={Zyma, Igor and Tukaev, Sergii and Seleznov, Ivan and Kiyono, Ken and Popov, Anton and Chernykh, Mariia and Shpenkov, Oleksii},
  journal={Data},
  volume={4},
  number={1},
  pages={14},
  year={2019}
}

@inproceedings{wei2022maskfeat,
  title={Masked Feature Prediction for Self-Supervised Visual Pre-Training},
  author={Wei, Chen and Fan, Haoqi and Xie, Saining and Wu, Chao-Yuan and Yuille, Alan and Feichtenhofer, Christoph},
  booktitle={Proceedings of the IEEE/CVF Conference on Computer Vision and Pattern Recognition (CVPR)},
  year={2022}
}

@article{hsu2021hubert,
  title={{HuBERT}: Self-Supervised Speech Representation Learning by Masked Prediction of Hidden Units},
  author={Hsu, Wei-Ning and Bolte, Benjamin and Tsai, Yao-Hung Hubert and Lakhotia, Kushal and Salakhutdinov, Ruslan and Mohamed, Abdelrahman},
  journal={IEEE/ACM Transactions on Audio, Speech, and Language Processing},
  volume={29},
  pages={3451--3460},
  year={2021}
}

@article{yu2026fame,
  title={Understanding and Correcting Low-Frequency Bias in {EEG} Foundation Model},
  author={Yu, Junjie and Deng, Zihan and Zhang, Jianyu and Mu, Junrong and An, Jiahui and Ma, Wenxiao and Lu, Ziling and Wang, Yue and Zhu, Yan and Lou, Kexin and Liu, Quanying},
  journal={arXiv preprint arXiv:2608.01898},
  year={2026}
}

@article{kulkarni2026manas2,
  title={{MANAS-2}: Constrained Reconstruction for {EEG} Foundation Models},
  author={Kulkarni, Arvasu and Mishra, Aditya Ray and Lahiri, Jeet Bandhu and Jain, Mahir and Runwal, Parshva and Saini, Lakshya and Panwar, Siddharth and Singh, Sandeep},
  journal={arXiv preprint arXiv:2609.13717},
  year={2026}
}

@article{sukhbaatar2026bandvq,
  title={{BandVQ}: Band-Wise Vector-Quantized {EEG} Foundation Model},
  author={Sukhbaatar, Jamiyan and Imamura, Satoshi and Tanaka, Toshihisa},
  journal={arXiv preprint arXiv:2605.24921},
  year={2026}
}

@article{bettinardi2025bioserenity,
  title={{BioSerenity-E1}: a self-supervised {EEG} model for medical applications},
  author={Bettinardi, Ruggero G. and Rahmouni, Mohamed and Gimenez, Ulysse},
  journal={arXiv preprint arXiv:2503.10362},
  year={2025}
}

@book{brillinger1981,
  title={Time Series: Data Analysis and Theory},
  author={Brillinger, David R.},
  publisher={Holden-Day},
  year={1981}
}

@article{whittle1953,
  title={Estimation and Information in Stationary Time Series},
  author={Whittle, Peter},
  journal={Arkiv f{\"o}r Matematik},
  volume={2},
  number={5},
  pages={423--434},
  year={1953}
}

@article{chen2023faced,
  title={A Large Finer-Grained Affective Computing {EEG} Dataset},
  author={Chen, Jingjing and Wang, Xiaobin and Huang, Chen and Hu, Xin and Shen, Xinke and Zhang, Dan},
  journal={Scientific Data},
  volume={10},
  pages={740},
  year={2023}
}

@article{zheng2017seedvig,
  title={A Multimodal Approach to Estimating Vigilance Using {EEG} and Forehead {EOG}},
  author={Zheng, Wei-Long and Lu, Bao-Liang},
  journal={Journal of Neural Engineering},
  volume={14},
  number={2},
  pages={026017},
  year={2017}
}

@article{schalk2004bci2000,
  title={{BCI2000}: A General-Purpose Brain-Computer Interface ({BCI}) System},
  author={Schalk, Gerwin and McFarland, Dennis J. and Hinterberger, Thilo and Birbaumer, Niels and Wolpaw, Jonathan R.},
  journal={IEEE Transactions on Biomedical Engineering},
  volume={51},
  number={6},
  pages={1034--1043},
  year={2004}
}

@misc{mumtaz2016mdd,
  title={{MDD} Patients and Healthy Controls {EEG} Data (New)},
  author={Mumtaz, Wajid},
  howpublished={figshare, Dataset},
  year={2016}
}

@book{berry2017aasm,
  title={The {AASM} Manual for the Scoring of Sleep and Associated Events: Rules, Terminology and Technical Specifications},
  author={Berry, Richard B. and others},
  publisher={American Academy of Sleep Medicine},
  year={2017}
}

\end{document}